\def\eqref#1{equation~\ref{#1}}
\def\1{\bm{1}}
\DeclareMathAlphabet{\mathsfit}{\encodingdefault}{\sfdefault}{m}{sl}
\SetMathAlphabet{\mathsfit}{bold}{\encodingdefault}{\sfdefault}{bx}{n}
\DeclareMathOperator*{\argmax}{arg\,max}
\newtheorem{proposition}{Proposition}
\newcommand\blfootnote[1]{%
  \begingroup
  \renewcommand\thefootnote{}\footnote{#1}%
  \addtocounter{footnote}{-1}%
  \endgroup
}
\newcommand{\blue}[1]{{\color{uoftoceanblue} #1}}
\newcommand{\red}[1]{{\color{uoftred} #1}}
\newcommand{\algoname}{MAD-TD\xspace}
\newcommand{\fullalgoname}{Model-Augmented Data for Temporal Difference learning}
\definecolor{sb1}{HTML}{1f77b4}
\definecolor{sb2}{HTML}{ff7f0e}
\definecolor{sb3}{HTML}{2ca02c}
\definecolor{sb4}{HTML}{d62728}
\definecolor{sb5}{HTML}{9467bd}
\definecolor{sb6}{HTML}{8c564b}
\definecolor{sb7}{HTML}{e377c2}
\definecolor{sb8}{HTML}{7f7f7f}
\definecolor{sb9}{HTML}{bcbd22}
\definecolor{sb0}{HTML}{17becf}
\newcommand{\sbcblue}[1]{{\color{sb1}#1}}
\newcommand{\sbcorange}[1]{{\color{sb2}#1}}
\newcommand{\sbcgreen}[1]{{\color{sb3}#1}}
\newcommand{\sbcred}[1]{{\color{sb4}#1}}
\newcommand{\sbcpurple}[1]{{\color{sb5}#1}}
\newcommand{\sbcpink}[1]{{\color{sb7}#1}}
\newcommand{\rebuttal}[1]{{\color{black} #1}}
\title{MAD-TD: Model-Augmented Data stabilizes\\ High Update Ratio RL}
\author{Claas A Voelcker{\normalfont $^*$}\\
University of Toronto \\ Vector Institute
\And
Marcel Hussing{\normalfont $^*$}\\
University of Pennsylvania
\And
Eric Eaton\\
University of Pennsylvania
\And
Amir-massoud Farahmand\\
Polytechnique Montréal \\
Mila -- Quebec AI Institute\\
University of Toronto \\
\And
Igor Gilitschenski\\
University of Toronto \\
Vector Institute
}
\begin{document}

\maketitle

\begin{abstract}
Building deep reinforcement learning (RL) agents that find a good policy with few samples has proven notoriously challenging. 
To achieve sample efficiency, recent work has explored updating neural networks with large numbers of gradient steps for every new sample.
While such high update-to-data (UTD) ratios have shown strong empirical performance, they also introduce instability to the training process. 
Previous approaches need to rely on periodic neural network parameter resets to address this instability, but restarting the training process is infeasible in many real-world applications and requires tuning the resetting interval. 
In this paper, we focus on one of the core difficulties of stable training with limited samples: the inability of learned value functions to generalize to unobserved on-policy actions.
We mitigate this issue directly by augmenting the off-policy RL training process with a small amount of data generated from a learned world model. 
Our method, \fullalgoname~(\algoname) uses small amounts of generated data to stabilize high UTD training and achieve competitive performance on the most challenging tasks in the DeepMind control suite. 
Our experiments further highlight the importance of employing a good model to generate data, \algoname's ability to combat value overestimation, and its practical stability gains for continued learning.
\end{abstract}

\section{Introduction}

\blfootnote{\hspace{-0.4em}${}^*$ The two first authors contributed equally to this work. Corresponding author: \url{cvoelcker@cs.toronto.edu}}
Instead of solely relying on data gathered by a target policy, \emph{off-policy} reinforcement learning (RL) aims to leverage experience gathered by past policies \citep{sutton2018introduction} to fit a value function for the target policy. 
Ideally, training over many iterations should help extract important information from past data.
However, recent work has shown that simply increasing the number of gradient update steps, the \emph{replay ratio} or \emph{update-to-data (UTD) ratio}, can lead to highly unstable learning~\citep{nikishin2022primacy,doro2023barrier,hussing2024dissecting,nauman2024bigger}.

Prior work has stabilized the learning by using double Q minimization to reduce overestimation \citep{fujimoto2018addressing}, training ensemble methods to improve value estimation \citep{chen2020randomized,hiraoka2022dropout}, introducing architectural regularization \citep{hussing2024dissecting,nauman2024bigger}, or resetting networks periodically throughout the learning process \citep{doro2023barrier,schwarzer2023bigger,nauman2024bigger}.
However, while useful, pessimistic underestimation and architectural regularization are insufficient by themselves  to combat the problem \citep{hussing2024dissecting}, and so most methods resort to either network resets or ensembles.
Critic ensembles can be expensive to train, and resetting has several important limitations: in real systems, re-executing a random policy can be expensive or unsafe, the resetting interval needs to be carefully tuned \citep{hussing2024dissecting}, and when storing a full reset buffer is infeasible, resetting loses important information.

We narrow in on a key issue contributing to unstable training: \emph{wrong value function estimation on unobserved on-policy actions} \citep{thrun1993issues,tsitsiklis1996analysis}.
Off-policy RL uses the values of states sampled under old policies with actions from the target policy to update the value function.
However, these state-action pairs themselves are not in the replay buffer and hence their value estimate is not directly improved by training.
Consequently, a learned function which achieves low error on seen data is not guaranteed to generalize well to actions that \emph{differ} from past actions.
This problem is related to overfitting \citep{li2023efficient} and contributes to overestimation \citep{thrun1993issues,hasselt2010double,fujimoto2018addressing}.
However, overfitting assumes that train and test set are drawn from the same distribution, while we focus on the distribution shift between data collection and target policy.
Previous work has investigated the difficulty of off-policy learning due to this shift~\citep{maei2009convergent,sutton2016emphatic,hasselt2010double,fujimoto2018addressing}, yet there are no tractable mitigation strategies that work well in the high UTD regime with deep RL.

To corroborate our hypothesis that generalization to unobserved actions is a major obstacle for training at high UTDs, we examine the behavior of value functions on on-policy transitions. 
Our experiments reveal that value functions generalize significantly worse to unobserved on-policy action transitions than to validation data from the same distribution as the training set.
Building on this, we propose to improve on-policy value estimation by using \emph{model-generated on-policy data}.

Previous investigations into model-based deep RL have focused on learning values fully in model roll-outs \citep{buckman2018sample,janner2019mbpo,Hafner2020Dream,ghugare2023simplifying} and the associated challenges~\citep{zhao2023simplified,hansen2024tdmpc}.
In contrast, we show that mixing a small amount of model-generated on-policy data with real off-policy replay data is sufficient to achieve stable learning in the high UTD regime.
Our method, \fullalgoname (\algoname), mitigates the generalization issues of the value function in the hardest tasks of the DeepMind control (DMC) benchmark~\citep{tunyasuvunakool2020dmcontrol} and achieves strong and stable high UTD learning without resetting or redundant ensemble learning. 

The main contributions of this work are twofold:
First, we empirically show the existence of misgeneralization from off-policy value estimation to on-policy predictions. We connect this issue to the challenge of stable learning with high update ratios and highlight how increasing the update ratio increases Q function overestimation.
Second, we provide a new method called MAD-TD that improves the value function accuracy on unobserved on-policy actions with model-generated data and stabilizes training at high update ratios. This method proves to have equivalent performance to or outperform previous strong baselines.

\section{Mathematical background}

We consider a standard RL setting, the discounted infinite-horizon MDP $(\mathcal{X}, \mathcal{A}, \mathcal{P}, r, \rho, \gamma)$ with state space $\mathcal{X}$, action space $\mathcal{A}$, a transition kernel $\mathcal{P}: \mathcal{X} \times \mathcal{A} \rightarrow \mathcal{M}(\mathcal{X})$, a reward function $r: \mathcal{X} \times \mathcal{A} \rightarrow \mathbb{R}$, starting state distribution $\rho \in \mathcal{M}(\mathcal{X})$ and a discount factor $\gamma \in [0,1)$ \citep{puterman1994markov,sutton2018introduction}.
For a space $Y$ we use $\mathcal{M}(Y)$ to denote the set of probability measures over the space.
Our goal is to learn a policy $\pi: \mathcal{X} \rightarrow \mathcal{M}(\mathcal{A})$~ that maximizes the discounted sum of future rewards 
\begin{equation}
    \pi^* \in \argmax_{\pi \in \Pi}\sum_{t=0}^\infty \mathbb{E}_{\mathcal{P}^\pi}[\gamma^t r(x_t, a_t)|x_0\sim\rho] \enspace,
\end{equation}
where actions are sampled according to the policy and new states according to the transition kernel.

\subsection{Off-policy value function learning}

As an intermediate objective, many algorithms attempt to simplify the direct policy optimization problem by first learning a policy value function $Q^\pi$, which is defined via a recursive equation
\begin{align}
Q^\pi(x,a) = r(x,a) + \gamma \mathbb{E}_{x' \sim \mathcal{P}(\cdot|x,a), a' \sim \pi(\cdot|x')}\left[Q^\pi(x',a')\right]\enspace.    
\end{align}
The policy can then be incrementally improved by picking 
$\pi_{k+1}(x) \in \argmax_{a \in \mathcal{A}} Q^{\pi_k}(x,a)\label{eq:pi_update}$
at every time step $k$.
In practice, $Q^\pi$ and $\pi$ are often parameterized as neural networks and learned from data. 
To increase the sample efficiency of the algorithm, it is common to store \textit{all} collected interaction data independent of the collection policy in a replay buffer $\mathcal{D} = \{(x_t, a_t, r_t, x_{t+1})_{t=0}^{T}\}$.
As the Q-value only depends on the policy via the policy evaluation at the next state, it is possible to estimate Q-values from past interaction data by minimizing the fitted Q-learning objective 
\begin{align}
\mathcal{L}\left(\hat{Q}\middle|\blue{\mathcal{D}}, \pi\right) = \frac{1}{|\mathcal{D}|} \sum_{t=0}^T \left| \hat{Q}(\blue{x_t,a_t}) - \left[\blue{r_t} + \gamma \hat{Q}\left(\blue{x_{t+1}}, \red{a'}\right)\right]_\mathrm{sg} \right|^2\quad \mathrm{with }~\red{a'}\sim \pi(\cdot|\blue{x_{t+1}}) \enspace. \label{eq:off_policy_q_update}
\end{align}
Here $[\cdot]_\mathrm{sg}$ denotes the stop gradient operation introduced to avoid the double sampling bias and all data contained in the replay buffer is colored \blue{blue}. 
However, the Q value at the next state $\blue{x_{t+1}}$ is evaluated with an action $\red{a'}$ that is \emph{not} guaranteed to be in the replay memory, as the target policy can be different from the policy used to gather the sample.
This means that we require the Q value to generalize to potentially unseen actions.
We provide a visualization of this issue in~\autoref{ill:coverage}.

\section{Investigating the root cause of unstable Q learning}

\begin{figure}[t]
\begin{minipage}{0.6\textwidth}
\vspace{-1em}
    \includegraphics[width=\textwidth]{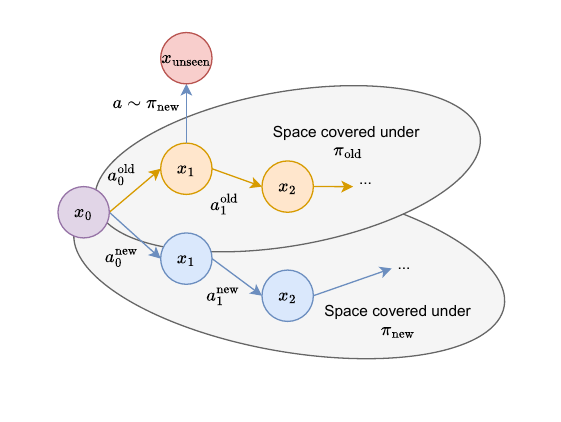}
\end{minipage}
~
\begin{minipage}{0.39\textwidth}
    \caption{A visualization of the core issue we investigate. Even if a replay buffer contains good coverage for two policies ($\pi_\mathrm{old}$ and $\pi_\mathrm{new}$) starting from $\rho=x_0$, this does not guarantee that it contains a transition for executing an action under the new policy on a state visited under the old. However, this state-action pair's value estimate is used to update the value of state $x_0$ via \autoref{eq:off_policy_q_update}, without being grounded in an observed transition.}
    \label{ill:coverage}
\end{minipage}
\vspace{-3em}
\end{figure}

Minimizing \autoref{eq:off_policy_q_update} finds the policy Q function over a replay buffer with sufficient coverage of all states and actions that this policy visits.
However, in most continuous control RL algorithms~\citep{ddpg,haarnoja2018sac, fujimoto2018addressing}, this update is interleaved with policy update steps .
The data in $\mathcal{D}$ then necessarily becomes \emph{off-policy} as training progresses. 

This means that the number of actor and critic optimization steps needs to be balanced with gathering new data.
Obtaining new on-policy data is vital to continually improve policy performance \citep{ostrovski2021difficulty}, but performing more update steps before gathering new data ensures that the existing data has been used effectively to improve the policy.
The \emph{replay ratio} \citep{fedus2020revisiting} or \emph{update-to-data (UTD) ratio} \citep{nikishin2022primacy}, which governs the number of gradient steps per environment step, is therefore a vital hyperparameter.

Naively training with high UTD ratios can lead to collapse in off-policy deep RL \citep{nikishin2022primacy}.
We conjecture that one of the major causes of the instability of high UTD off-policy learning are wrong Q values on \emph{unobserved actions}.
This is a well-known problem for off-policy TD learning \citep{baird1995residual,tsitsiklis1996analysis,sutton2016emphatic,ghosh2020representations}.
To differentiate the problem from \emph{overfitting} to the training distribution, we use the term \emph{misgeneralization} to highlight the importance of the distribution shift in causing the issue.
Our experiments in \autoref{sec:off-policy-eval-exp} show that generalization to on-policy actions is more difficult than generalization to a validation dataset that follows the training distribution, and that higher UTDs exacerbate the issue.

\subsection{Action distribution shift can cause off-policy Q value divergence}
\label{sec:theory}
To highlight the role that on-policy actions play in stabilizing Q value learning, we show an analysis the stability of Q learning with linear features.
The core ideas follow \citet{sutton2016emphatic} and are also explored by \citet{tsitsiklis1996analysis,sutton1988learning}.
We assume that the Q function is parameterized with fixed features and weights as $Q(x,a) = \phi(x,a)^\top \theta$.
Let $X$ and $A$ be the sizes of the state and action space respectively. Let $P \in \mathbb{R}^{X\cdot A \times X}$ be the matrix of transition probabilities from state-action pairs to states.
A policy can then be expressed as a mapping $\Pi \in \mathbb{R}^{X\times X\cdot A}$ from states to the likelihood of taking each action.
$R \in \mathbb{R}^{X\cdot A}$ is the vector of rewards.
$D^{\pi} \in \mathbb{R}^{X\cdot A \times X\cdot A}$ is a matrix where the main diagonal contains the discounted state-action occupancies of $P^\pi$ starting from $\rho$.
If we assume access to a mixed replay buffer $\mathcal{D} = \bigcup \{D^{\pi_1}, \dots, D^{\pi_n}\}$ gathered with different policies, the Q learning loss for a target policy $\Pi$ can be written as
\begin{align}
    L(\theta) = & \sum_{i=1}^n \left[D^{\pi_i}\left(\Phi^\top \theta - [R + \gamma P\Pi \Phi^\top \theta]_\mathrm{sg}\right)^2\right] \enspace.
\end{align}
The stability of learning with this loss can be analyzed using the gradient flow
    \begin{align}
    \dot{\theta} &= - 2 \Phi \sum_{i=1}^n D^{\pi_i} \left(I - \gamma P\Pi \right)\Phi^\top \theta + 2 \Phi \sum_{i=1}^n D^{\pi_i} R \enspace.
\end{align}
This gradient flow is guaranteed to to be stable around a fixed point $\theta^*$ if the key matrix $\sum_{i=1}^n D^{\pi_i} \left(I - \gamma P \Pi \right)$ is positive definite \citep{sutton1988learning}. 
Details and a proof of the following statement are provided in \autoref{app:math}.
We can decompose the key matrix and see that the positive definiteness depends on the difference in policy between the replay buffer and the target policy
\begin{align}
    \sum_{i=1}^n D^{\pi_i} \left(I - \gamma P\Pi \right) = \blue{\underbrace{\sum_{i=1}^n D^{\pi_i} \left(I - \gamma P\Pi_i \right)}_{\text{positive definite}}} + \gamma \red{\underbrace{\sum_{i=1}^n D^{\pi_i}P (\Pi_i - \Pi)}_{\text{no guarantees}}} \enspace .
\end{align}
In general, we can provide no guarantees for the \red{second term} outside of the on-policy case ($\Pi_i = \Pi)$ where it becomes $0$.
The stability depends on the difference between the target policy and the data-collection policies.
If the target policy takes actions which are not well covered under the training policies, the remainder can be non positive definite.
This also matches the intuition that learning fails if we simply do not have sufficient evidence for the Q function of unobserved actions.

When using features, the eigenvalue conditions on the key matrix are only sufficient, not necessary, as the features can allow for sufficient generalization between observed and unobserved state-action pairs.
In deep RL, the features $\phi$ are updated alongside with the weights, making it hard to provide definitive mathematical statements on stability.
With good function approximation, we could hope that the learned value function generalizes correctly to unseen actions.
In the next section we investigate this for a non-trivial task from the DMC suite and highlight that, while the value function does not diverge irrecoverably, good generalization is not guaranteed either. 

\subsection{Empirical Q value estimation with off-policy data}
\label{sec:off-policy-eval-exp}

%
%
\begin{figure}[t]
\begin{minipage}{0.25\textwidth}
    \includegraphics[width=\textwidth]{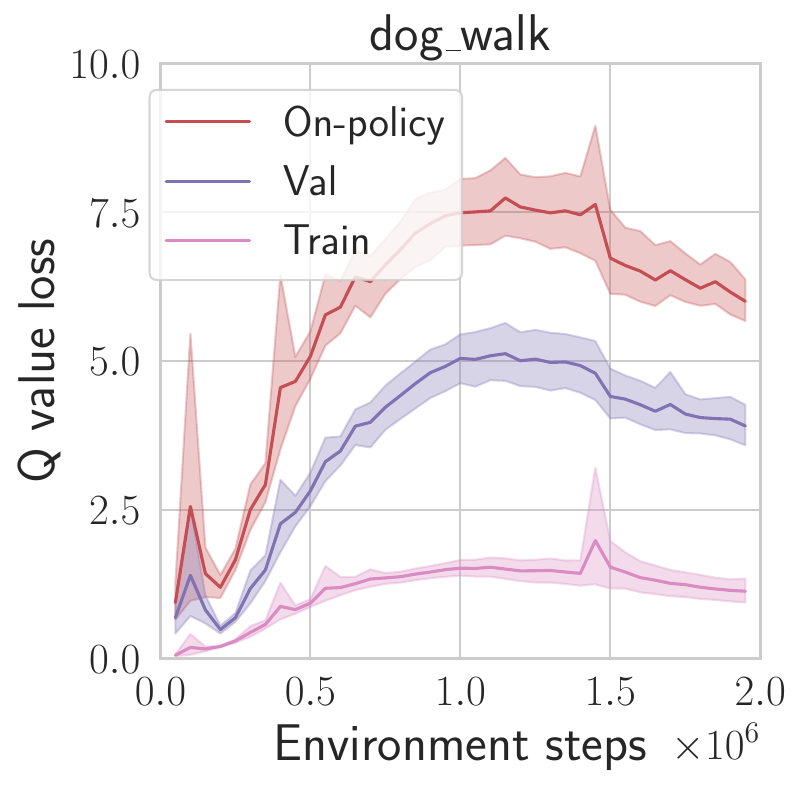}
\end{minipage}
~
\begin{minipage}{0.745\textwidth}
    \includegraphics[width=\textwidth]{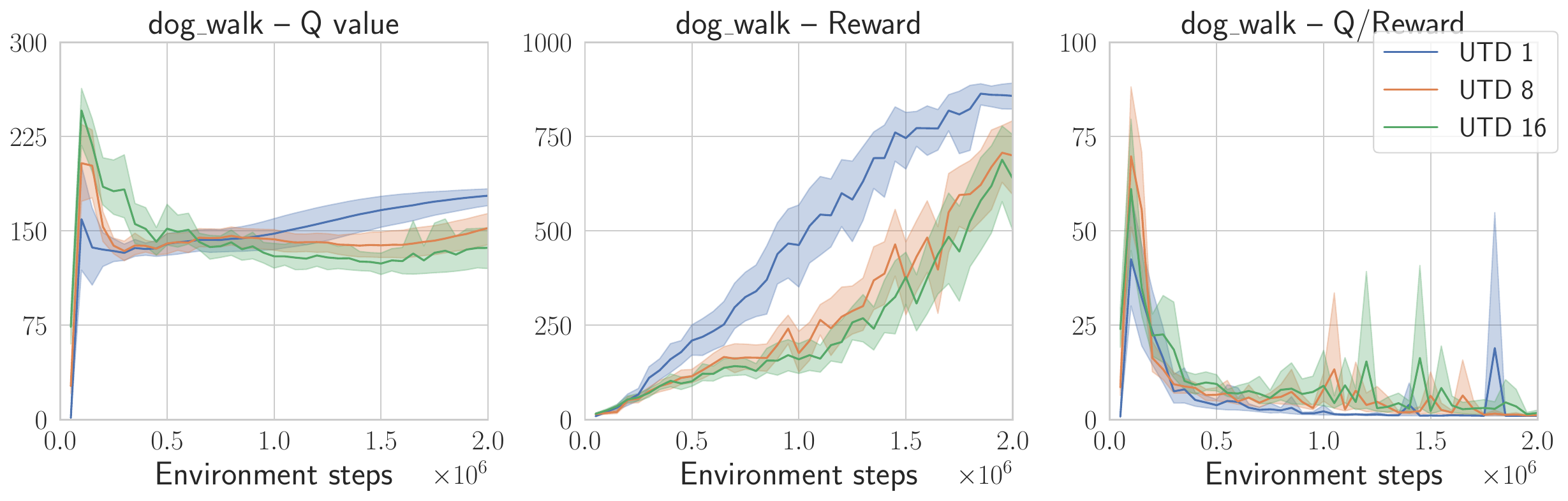}
\end{minipage}
\caption{Left: the \sbcpink{train}, \sbcpurple{validation}, and \sbcred{on-policy} validation error of the Q function at UTD 1. Right: the Q values and return curves of TD3 agents across different UTD \sbcblue{1}, \sbcorange{8}, and \sbcgreen{16}.}
\label{fig:q_eval}
\end{figure}

In environments with large state-action space, ensuring coverage is difficult.
To investigate whether learning is stable nonetheless, we train a model-free TD3 agent on the \emph{dog walk} environment \citep{tunyasuvunakool2020}.
The architecture is presented in \autoref{sec:method}, and is regularized to prevent catastrophic divergence \citep{hussing2024dissecting,nauman2024overestimation} and uses clipped double Q learning \citep{fujimoto2018addressing}.
This means it uses the most common techniques which are designed to prevent misgeneralization and overestimation.

While training a TD3 agent \citep{fujimoto2018addressing}, we save transitions in a validation buffer with a 5\% probability.
At regular intervals we compute the critic loss on this validation set.
In addition, we reset our simulator to each validation state and sample an action from the target policy.
We then simulate the ground truth on-policy transition and compute the loss over these.
This allows us to test how well our value function generalizes to target policy state-action pairs (as depicted in~\autoref{ill:coverage}).

The results are presented in \autoref{fig:q_eval} and show a gap both between the train and validation sets, as well as the validation and the on-policy sets.
While we use the on-policy state-actions to update the Q value, these estimates are not actually consistent with the environment.
Furthermore, the Q value overestimation grows with increasing UTDs.
This phenomenon was previously discussed in the context of over-training on limited data \citep{hussing2024dissecting} .

The experiments show that the problem outlined in \autoref{sec:theory} is not merely a mathematical curiosity, but that Q value generalization to out-of-replay-distribution actions is difficult in practice, and becomes more difficult with increasing update ratios.
Even though full divergence is not observed as new data is continually added to the replay buffer, it takes a long time for the effects of severe early overestimation to dissipate.

\subsection{Previous attempts to combat misgeneralization and overestimation}
\label{sec:prior}
Prior strategies that deal with misgeneralization can be grouped into three major directions: architectural regularization to prevent divergence of the value function, pessimism or ensemble learning to combat overestimation, and networks resets to restart learning.
While all of these interventions help to some degree, they each either do not solve the problem in full or cause additional issues. 
\rebuttal{We outline highly related work here and provide an additional related work section in~\autoref{app:related_work}.}

\textbf{Architectural regularization}~~~Architecture changes \citep{hussing2024dissecting,nauman2024overestimation,nauman2024bigger,lyle2024disentangling} and auxiliary feature learning losses \citep{schwarzer2021dataefficient,zhao2023simplified,ni2024bridging,voelcker2024when} are largely reliable interventions, and have shown to provide improvements without much drawbacks in prior work.
However, as \citet{hussing2024dissecting} and our experiment presented in \autoref{sec:off-policy-eval-exp} highlight, by themselves they can mitigate catastrophic overestimation and divergence, but do not guarantee proper generalization. 

\textbf{Pessimism and ensembles}~~~To combat overestimation directly, the most prominent approach in continuous action spaces is Clipped Double Q Learning \citep{fujimoto2018addressing}. 
Here, a Q value estimate is obtained from two independent estimates $\hat{Q}_1$ and $\hat{Q}_2$.
If the error of the two critic estimators is assumed to be independent noise on the true critic estimate 
then using the minimum over both estimates is guaranteed to underestimate the true critic value in expectation.
However, in complex settings this assumption on the the error of the critic estimates may not hold. 

Ensembles \citep{lan2020maxmin,chen2020randomized,hiraoka2022dropout,farebrother2023protovalue} or online tuning of the rate of pessimism \citep{moskovitz2021tactical} have been proposed to obtain tighter lower bounds on the Q value.
However, these strategies can be expensive as redundant models or hyperparameter tuning are needed.
As a simpler strategy, recent works have also employed clipping to obtain an upper bound of the Q function to prevent divergence \citep{fujimoto2024sale}.

\textbf{Resetting}~~~Finally, network resets been shown to mitigate training problems \citep{nikishin2022primacy,doro2023barrier,schwarzer2023bigger,nauman2024bigger} in high UTD regimes.
However, in cases where the agent fails to explore any useful parts of the state space within the reset interval, restarting the learning process will not improve performance \citep{hussing2024dissecting}.
This makes tuning the resetting interval both important and potentially difficult and no tuning recipes have been presented.
Resetting is also a potentially hazardous strategy in real-world applications, where re-executing a random policy might be costly or infeasible due to safety constraints.
Finally, it heavily relies on the assumption that all past interaction data can be kept in the replay buffer.

\rebuttal{\textbf{Data generation}~~~\cite{lu2024synthetic} attempts to combat failures of high UTD learning by supplementing a replay buffer with data generated from a trained diffusion model.
This idea is inspired by the hypothesis that failure to learn in high-UTD settings is caused by a lack of data~\citep{nikishin2022primacy}.
The method, SynthER, improves learning accuracy on simple tasks in the DMC benchmark.
However, we demonstrate that simply adding more data is insufficient to combat misgeneralization by comparing SynthER to \algoname in \autoref{app:related_work} and \autoref{app:synther}.}

All of these strategies are somewhat able to alleviate the problem of out-of-distribution value estimation, yet none of them directly address the issue at the root.
In the next chapter, we present an alternate approach that aims to directly regularize the action value estimates under the target policy.

\section{Mitigation via model-generated synthetic data}

As value functions misgeneralize due to lack of sufficient on-policy data, we propose to obtain synthetic data from a learned model instead.
However, model-based RL can also cause problems such as compounding world model errors and optimistic exploitation of errors in the learned model. 
By using both real and model-generated data, we can trade-off these issues: on-policy data improves the value function and limits the impact of off-policy distribution shifts, while using only a limited number of model-generated samples prevents model errors from deteriorating the value estimates. 

Our approach builds on the TD3 algorithm \citep{fujimoto2018addressing} and uses an update ratio of 8 by default. 
Our critic is updated with both model-based and real data following the DYNA framework \citep{dyna}.
More precisely, we replace a small fraction $\alpha$ of samples $\{x,a,r,x'\}$ in each batch with samples from a learned model $\hat{p}$ starting from the same state $\{x, \pi(x), \hat{r} ,\hat{x}'\}$ with $\hat{r}, \hat{x}' \sim \hat{p}(\cdot|x, \pi(x))$. 
In our experiments, $\alpha$ is set to merely $5\%$.
\rebuttal{We found that this small amount provides competitive performance across a wide range of values (compare \autoref{app:model_data}).}
We term this approach \fullalgoname ~(\algoname).

\textbf{Model vs Q function generalization}~~~We expect that a learned models will yield better generalization than the Q function for two reasons.
First, the policy is updated each step to find an action that maximizes the value function.
This means we are effectively conducting an adversarial search for overestimated values.
The model's reward and state estimation error on the other hand are independent of this process.
We test the adversarial robustness of our model-augmented value functions in \autoref{sec:adv_robustness}.
Second, our experiment shows that value functions primarily diverge at the beginning of training.
In these cases, coverage is low and on-policy state-action pairs are often not available.
Obtaining a slightly wrong, yet converging value estimate can then be more useful than a diverging one. 
Even as more data is gathered, new policies might not revisit old states with a high likelihood.
Therefore even as training continues we expect the model data to provide some benefit.

\subsection{Design choices and training setup}
\label{sec:method}

Our model is based on the successful TD-MPC2 model \citep{hansen2024tdmpc} combined with the deterministic actor-critic algorithm TD3 \citep{fujimoto2018addressing}.
We aim to reduce the complexity of TD-MPC2 to the minimal necessary components to achieve strong learning in the DM Control suite, and thus forgo added exploration noise, SAC, ensembled critics, and longer model rollout for training or policy search. We outline several design choices here and refer to \autoref{app:setup} for more detail. We additionally ablate our version of the model against TD-MPC2 in~\autoref{app:ablation}.

\textbf{Encoder:}~~~~Like TD-MPC2, we parameterize the state with a learned encoder $\phi: \mathcal{X} \rightarrow \mathcal{Z}$ with a SimNorm nonlinearity \citep{lavoie2023simplicial}. 
This transformation groups a latent vector into groups of $k$ entries and applies a softmax transformation over each group.
This bounds the norm of the features, which has been shown aid with stable training \citep{hussing2024dissecting,nauman2024overestimation}.

\textbf{Critic representation and loss:}~~~We use the HL-Gauss transformation to represent the Q function \citep{farebrother2024stop}. The critic loss is the cross-entropy between the estimated Q function's categorical representation and the bootstrapped TD estimate.
To stabilize learning, we initialize the critic network towards predicting $0$ for all states.

\textbf{Model loss:}~~~The world model predicts the next state latent representation and the observed reward from a given encoded state $\phi(x)$ and action $a$. The loss has three terms: the cross-entropy loss over the SimNorm representation of the encoded next state, the MSE between the reward predictions, and the cross-entropy between the next state critic estimate and the predicted state's critic estimate.
This final term replaces the MuZero loss in TD-MPC2 with a simplified variant based on the IterVAML loss \citep{itervaml}.
We provide the exact mathematical equations for the loss in \autoref{app:implementation}.


\textbf{Training:}~~~We train the architecture by interleaving one environment step with one round of updates with a varying number of gradient steps governed by the UTD parameter.
For each update step, a new mini-batch is sampled independently from a replay buffer of previously collected experience.
We found that varying the number of update steps only for the critic and actor while keeping the update ratio for the model and encoder updates at $1$ leads to significantly more stable learning.

\textbf{Run-time policy improvement with MPC:}~~~Following the approach outlined by  \citet{hansen2022temporal}, the learned model can also be used at planning time to obtain a better policy.
Using the model for MPC at planning time exploits the same benefit of models as the critic learning improvement: we obtain a model-corrected estimate of the value function and choose our policy accordingly.
As we only train our model for one step, we also conduct the MPC rollout for one step into the future.


\section{Experimental evaluation}

\begin{figure}[t]
    \centering
    \includegraphics[width=1.0\linewidth]{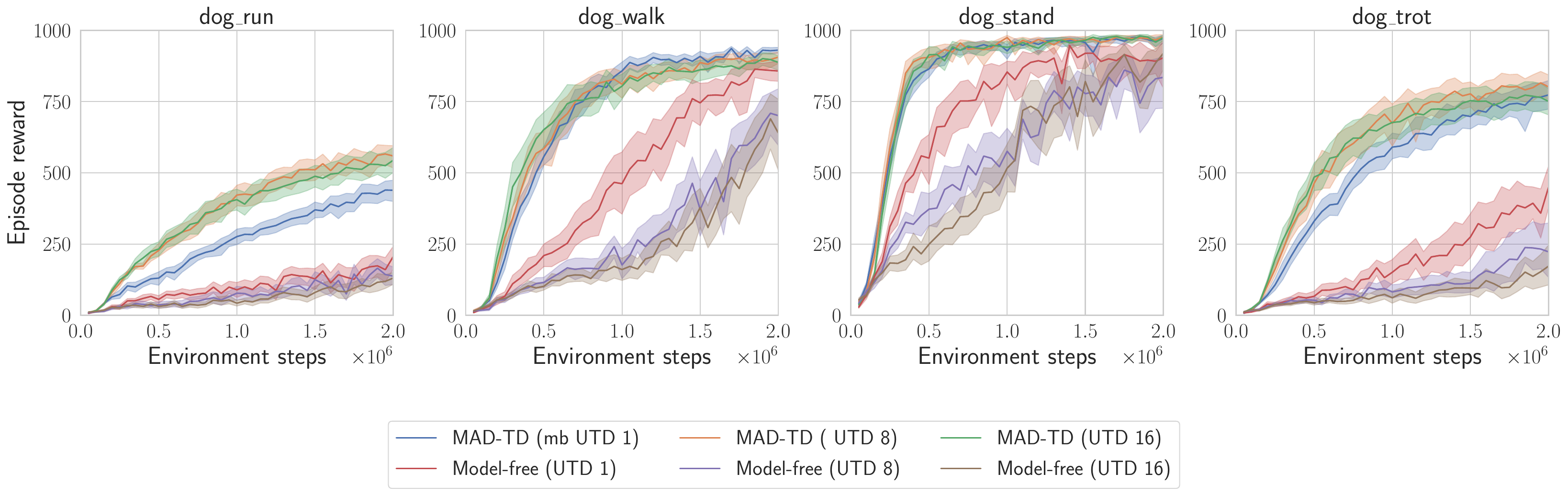}
    \caption{Return curves for the dog tasks with differing UTD values. The return increases or remains stable when training with \algoname. Without model data, the performance decreases under high UTD. MPC is turned off in these runs to cleanly evaluate the impact of model data on critic learning.}
    \label{fig:main_dog}
\end{figure}

We conduct all of our experiments on the DeepMind Control suite~\citep{tunyasuvunakool2020dmcontrol}. Following \citet{nauman2024bigger}'s recommendations we focus our main comparisons and ablations on the two hardest settings, the \emph{humanoid} and \emph{dog} environments (which we will refer to as the \emph{hard suite}).
\rebuttal{In \autoref{app:metaworld} we furthermore show results for the metaworld benchmark \citep{yu2019meta}.}
Implementation details can be found in \autoref{app:implementation}.
Unless stated otherwise we evaluate \algoname with a UTD of 8 and use the same hyperparameters across all tasks.

Note that even though we refer to training \algoname without using model data for the critic as ``model-free'', the algorithm still benefits from the model through feature learning which has proven to be a strong regularization technique in high UTD settings \citep{schwarzer2023bigger}.
All main result curves are aggregated across $10$ seeds per task. We plot mean and bootstrapped confidence intervals for the mean at the 95\% certainty interval.
For aggregated plots, we use the library provided by \citet{agarwal2021deep}.
Additional comparisons on more environments are presented in \autoref{app:results}.

\subsection{Impact of using model-generated data}

\begin{wrapfigure}{r}{0.45\textwidth}
     \vspace{-12pt}
    \centering
    \includegraphics[width=0.45\textwidth]{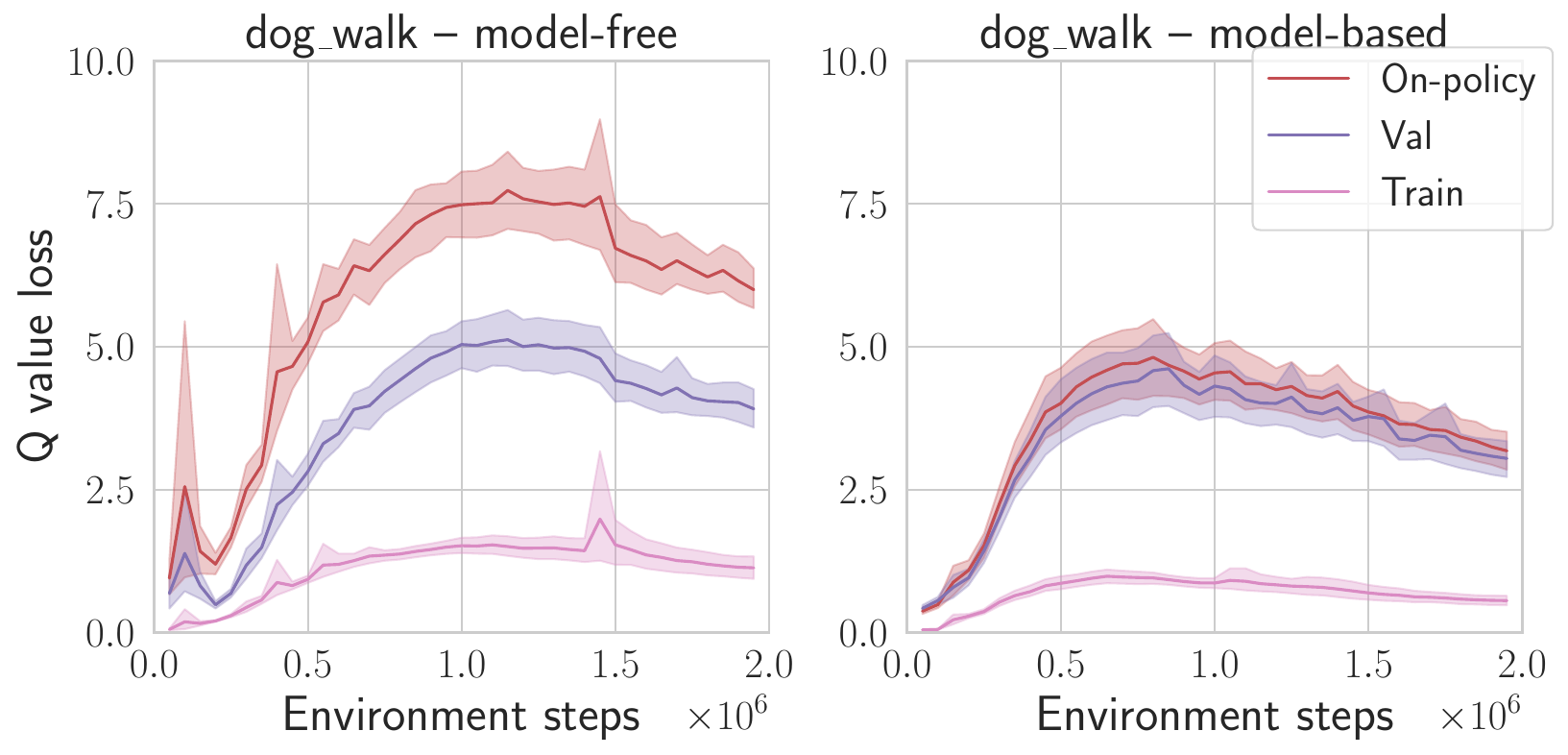}
    \caption{Mean loss values with and without generated data (see \autoref{fig:q_eval}) for UTD 1.}
    \label{fig:exp_repeat}    
     \vspace{-10pt}
\end{wrapfigure}

We first repeat the experiment presented in \autoref{sec:off-policy-eval-exp} and show the results in \autoref{fig:exp_repeat}.
Using model-based data closes the gap between on-policy and validation loss.
We also observe that the initial Q overestimation disappears, which is consistent across all hard environments (see \autoref{app:results_q}).
This provides evidence that we are indeed able to overcome the unseen action challenge.

\textbf{Performance with and without model data at varying UTD ratios:}~~~
\begin{figure}[b]
    \centering
    \includegraphics[width=1.\linewidth]{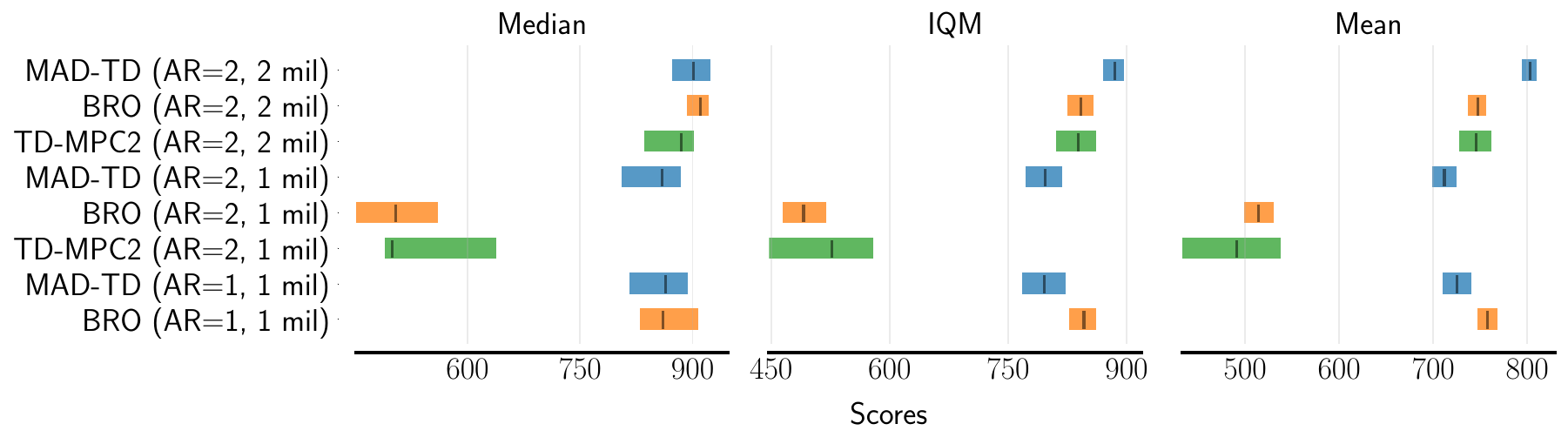}
    \caption{Performance comparison on the hard tasks for \sbcblue{\algoname}, \sbcorange{BRO}, and \sbcgreen{TD-MPC}, with varying number of steps and action repeat settings. \algoname is on par with all baselines, has higher mean and IQM when trained for 2 million time steps and action repeat 2, and strongly outperforms TD-MPC2 and BRO at 1 million time steps with action repeat 2.}
    \label{fig:comp_baseline}
\end{figure}
In \autoref{fig:main_dog} we present the impact of using model-based data across different UTD ratios.
Humanoid results are found in \autoref{app:results_hum}.
As is directly evident, across the dog tasks, we observe stagnating or deteriorating performance when increasing the update ratio, consistent with reports in prior work.
However, when using a small fixed amount of model generated data, this trend is reversed across all tested environments, with performance improving or at least remaining consistent.
We find that with model-based data, training is stable across a range of UTDs, even beyond those tested in recent high UTD work \citep{nauman2024bigger}.
We also note that we observe only limited benefits from increasing the UTD ratio when properly mitigating \emph{misgeneralization}, except for the highly challenging dog run task.

\textbf{Comparison with baselines:}~~~
As our method combines model-free and model-based updates, we compare our method against both TD-MPC2 \citep{hansen2024tdmpc}, a strong model-based baseline, and BroNet \citep{nauman2024bigger}, a recent algorithm proposed for high UTD learning.
Since \citet{nauman2024bigger} and \citet{hansen2024tdmpc} trained with differing numbers of action repeats, and we found that the performance does not cleanly translate between these regimes, we present our method both with an action repeat value of 1 and 2.
Some hyperparameters are adapted to the AR=1 setting (compare \autoref{tab:hyperparams}).
The results are presented in aggregate in \autoref{fig:comp_baseline}, with per environment curves show in \autoref{app:results_base} \rebuttal{for hard tasks and \autoref{app:results_further} for a wider range of DMC tasks.}.
We find that our method performs on par or above previous methods, and strikingly it is able to achieve higher returns faster than both TD-MPC2 and BRO.

\begin{figure}[t]
    \centering
    \includegraphics[width=1.\linewidth]{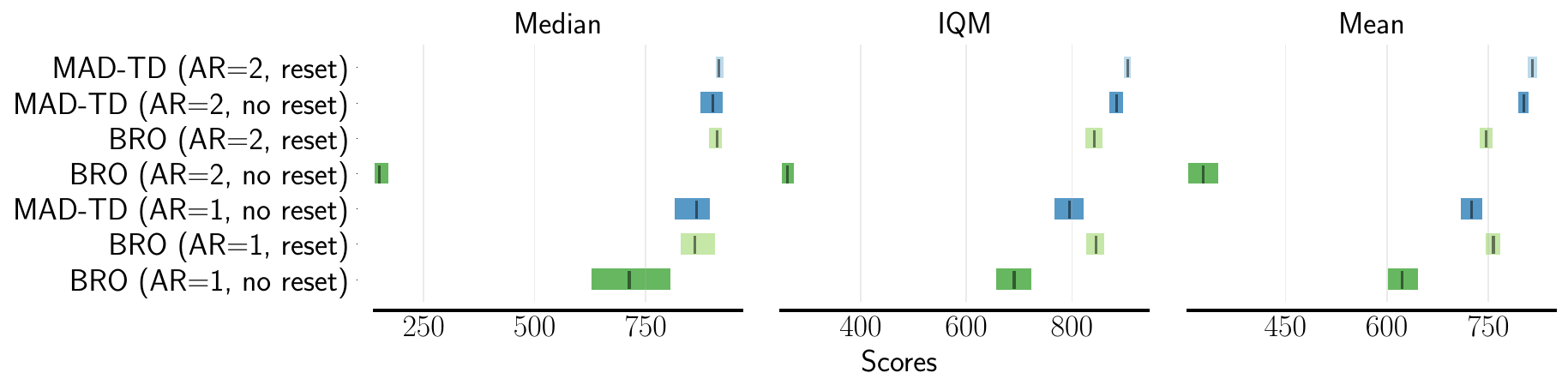}
    \caption{Resetting evaluation of \sbcblue{\algoname} and \sbcgreen{BRO}. Lighter color denotes performance with reset, and darker without. While \algoname's performance only increases slightly when adding resetting, BRO is unable to achieve strong performance in and setting without resetting.}
    \label{fig:comp_reset}
\end{figure}

\subsection{Performance and stability impact of resetting}

\begin{wrapfigure}{r}{0.25\textwidth}
    \vspace{-1.4em}
    \centering
    \includegraphics[width=0.25\textwidth]{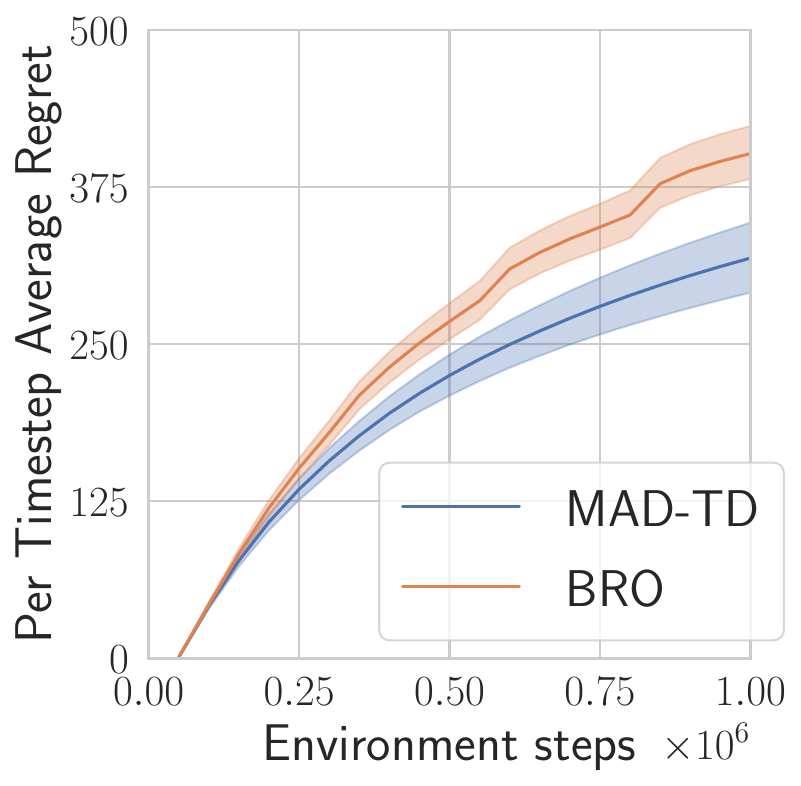}
    \caption{Mean average regret ($\downarrow$) on the hard suite. Lower regret corresponds to faster, more stable training. \sbcblue{MAD-TD} outperforms \sbcorange{BRO}.}
    \label{fig:regret}    
    \vspace{-1em}
\end{wrapfigure}
\textbf{Resetting comparison:}~~~To investigate if our technique enables more stable training, we set up an experiment to test the effects of resetting on our method. \autoref{fig:comp_reset} presents aggregate results comparing our approach and BRO, both with and without resetting. 
Across all tasks we find that resetting barely improves MAD-TDs performance with the tested hyperparameters.
Benefits can only be observed on some seeds and can most likely be attributed to restarting the exploration process \citep{hussing2024dissecting}.
However, the BRO algorithm is not able to achieve reliable performance without resets.
These results highlight that mitigating the 
problems related to incorrect generalization of the value function stabilize training, and that these problems are likely a major cause of the failure of high UTD learning in the DMC tasks.
Conjectured problems like the primacy bias effect~\citep{nikishin2022primacy} need to be carefully investigated as we do not find evidence that a primacy bias impacts \algoname's performance in the DMC environments. 
Our work of course does not preclude the existence of phenomena such as loss of stability in different environments, architectures, or training setups. More discussion on this can be found in \autoref{app:related_work}.

\textbf{Continued training:}~~~To highlight the pitfalls of resets, we employ a common RL theory metric the per timestep average regret
\begin{equation*}
    \overline{\text{Reg}}(T) = \frac{1}{T} \sum\nolimits_{t=0}^{T-1} \left(\mathcal{R}^* - \mathcal{R}_t\right) \enspace,
\end{equation*}
where $\mathcal{R}_t$ denotes the approximate cumulative return in episode $t$ and $\mathcal{R}^*$ the optimal return. We use the maximum return any of the algorithms achieved $\hat{\mathcal{R}}^*$  as a lower bound on the optimal return $\mathcal{R}^*$. Regret quantifies how much better the algorithm could have performed throughout training. In other words, in situations where continued learning is crucial, such as many safety critical applications, regret might be a better measure of performance. It captures not only how good the final policy is, but also how well the algorithm adapts over time, and minimizes mistakes. We present a comparison of MAD-TD and the resetting-based BRO in~\autoref{fig:regret} using an action repeat of $1$. The results show, even though both algorithms are close in their final return, their training behavior differs vastly. MAD-TD has lower regret showcasing its strength in continued deployment.

\subsection{Further experiments and ablations}
\label{sec:adv_robustness}
\begin{figure}[t]
    \centering
    \includegraphics[width=1.\linewidth]{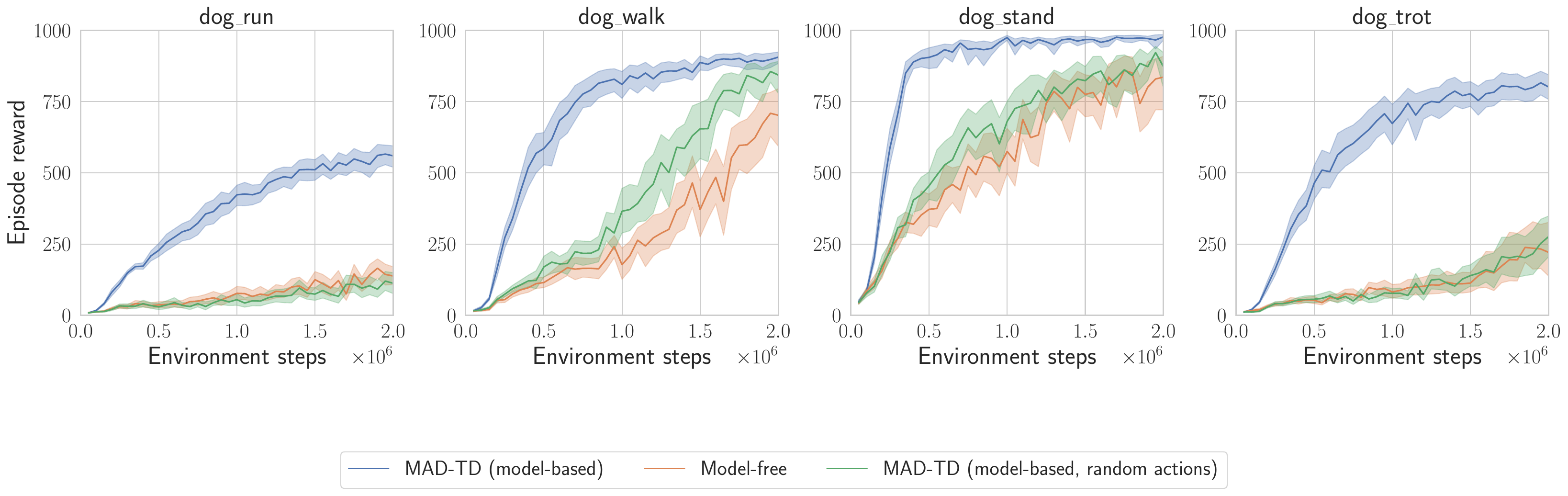}
    \caption{Return curves for the dog tasks when using \sbcblue{on-policy},  \sbcgreen{random} and  \sbcorange{no model-generated} data. When generating model-based data with random actions, performance of \algoname drops close to the model-free baseline, highlighting the importance of \emph{on-policy} actions.}
    \label{fig:random_action}
\end{figure}

To further test our approach, we present two additional experiments on the \emph{hard suite}: changing the action selection for the model data generation, and reducing the model performance.
In addition, we investigate the impact of using model based data on the smoothness of the learned value function.

\textbf{Off-policy action selection in the model:}~~~To verify that the  improvement in performance is due to the off-policy correction provided by the model, we repeat the \emph{hard suite} experiments with a UTD of $8$ and $5\%$ model data, but we chose actions randomly from a uniform distribution across the action space.
The results are presented in \autoref{fig:random_action}.
They highlight that random state-action pairs do not provide the necessary correction and the performance deteriorates to that of the model-free baseline.

\textbf{Smaller model networks:}~~~To study the effect of the modeling capacity on our method, we ablate the size of the latent model by reducing the network size across the hard suite.
\begin{figure}[t]
    \centering
    \includegraphics[width=1.0\linewidth]{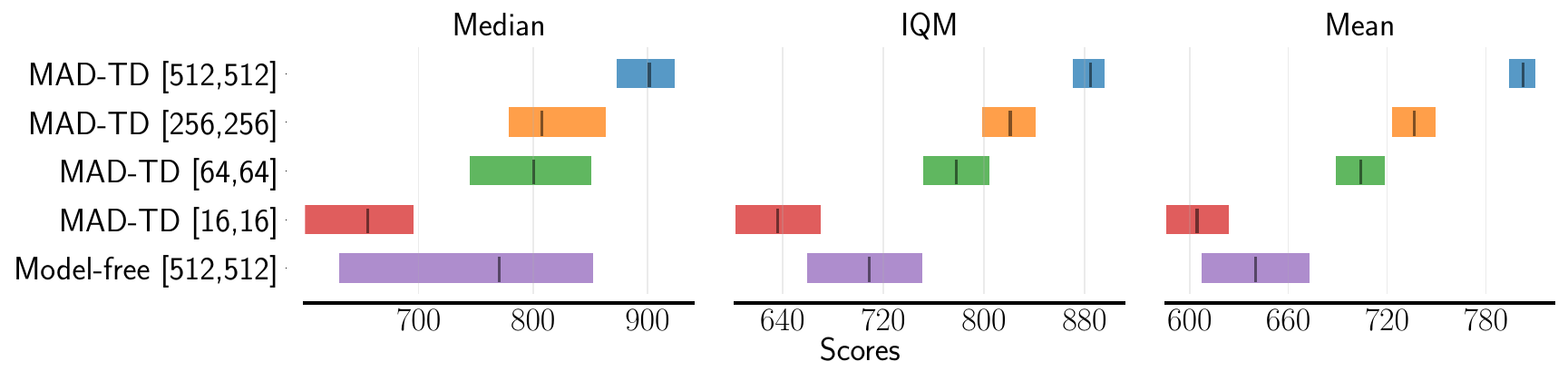}
    \caption{Performance evaluation when reducing the model size of the latent model in \algoname. The performance predictably drops with decreasing hidden layer size, however only strongly decreasing the model size below 64 reduces the performance below that of the model-free ablation.}
    \label{fig:model_ablation}
\end{figure}
The results are presented in \autoref{fig:model_ablation}.
We see that reducing the network size has an immediate and monotonic impact on the performance of our approach, suggesting that the model learning accuracy and prediction capacity is indeed vital for our approach to function well.
However, even with small models of 64 hidden units, we still see some benefits from training with the model predicted data.

\begin{figure}[b]
    \centering
    \includegraphics[width=0.75\linewidth]{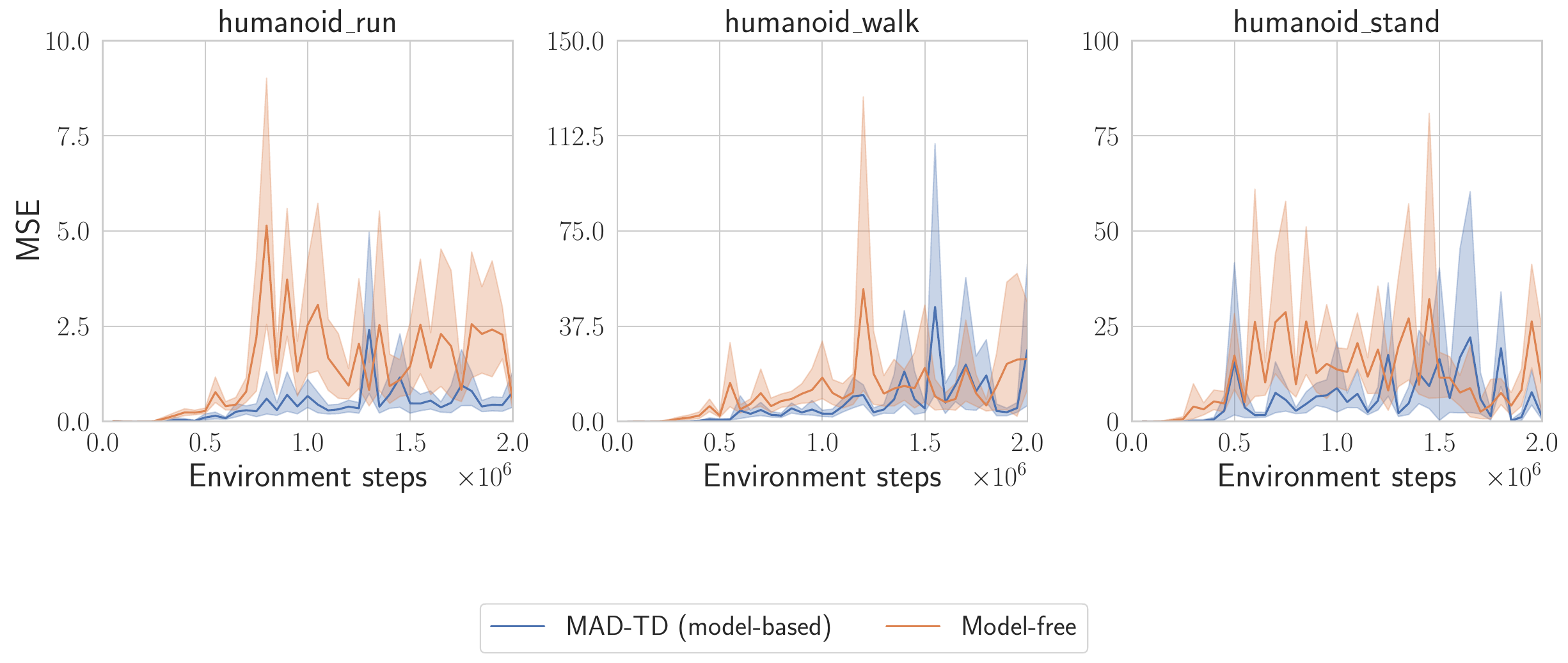}
    \caption{Magnitude of the difference between $Q(x,\pi(x))$ and $Q(x,\tilde{a})$, where $\tilde{a}$ is an adversarial perturbation of $\pi(x)$. We see larger perturbation for the runs without model correction data.}
    \label{fig:adv}
\end{figure}

\textbf{Perturbation robustness of the model-corrected values:}~~~To motivate our method, we conjectured that one problem with training actor-critic methods is that the actor conducts a quasi adversarial search for overestimated values on the learned critic.\footnote{\emph{Quasi} because the actor is not constrained to find an action close to the replay buffer sample.}
To substantiate this claim, we used the iterated projected gradient method \cite{madry2018towards} to estimate the smoothness of the learned value functions on the humanoid environments at a UTD of 1 with and without model data. The results in \autoref{fig:adv} show that not using any model data leads to value functions with higher oscillations, either across the whole training run (\textsf{\small humanoid\_run}), or in the middle of training (\textsf{\small stand} and \textsf{\small walk}).

\section{Conclusion}

Our experiments allow us to conclude that wrong generalization of the value functions to unseen, on-policy actions is indeed a major challenge that prevents stable off-policy RL, both in theory and in practice.
\fullalgoname~(\algoname) is able to leverage the learning abilities of latent self-prediction models to provide small, yet crucial amounts of on-policy transitions which help stabilize learning across the hardest DeepMind Control suite tasks.
With a relatively simple model architecture and learning algorithm, this method proves to be on par with, or even outperform other strong approaches, and does not rely on mechanisms such as value function ensembles or resetting which were previously conjectured to be necessary for stable learning in high UTD regimes. \rebuttal{However, we highlight limitations of the approach in~\autoref{app:limitations}}.

Our work opens up exciting avenues for future work.
The issue of poor generalization in off-policy learning can likely be tackled with other approaches such as diffusion models \citep{lu2024synthetic} or better pre-trained foundation models, and our presented experiments provide an important baseline for such work.
Furthermore, while we have purposefully kept our approach as simple as possible to validate our hypothesis, many ideas from the model-based RL community such as uncertainty quantification \citep{pets,talvitie2024bounding}, multi-step corrections \citep{buckman2018sample,Hafner2020Dream}, or policy gradient estimation \citep{amos2021model} can be combined with our approach.
Our insight that surprisingly little data is necessary to achieve strong correction can likely be leveraged in these other approaches as well to trade-off model errors and value function errors more carefully.
Finally, while we chose the data to roll out in our models at random, our insights can likely be combined with ideas from the area of DYNA search control \citep{pan2019hill,Pan2020Frequencybased} to select datapoints on which the correction has the most impact.




\subsubsection*{Acknowledgments}

We thank the members of the TISL and AdAge labs at the University of Toronto for enlightening discussions. 
We acknowledge the great help of Evgenii Opryshko, Taylor Killian, Amin Raksha, Maria Attarian, and Heiko Carrasco for providing detail feedback on our writing, and Evgenii for help with the experiments.
For the adversarial experiments, we received helpful advice from Avery Ma, Jonas Guan, and Anvith Thudi.

We thank the anonymous reviewers for their helpful feedback and in-depth discussion.

EE and MH's research was partially supported by the Army Research Office under MURI award W911NF20-1-0080, the DARPA Triage Challenge under award HR00112420305, and by the University of Pennsylvania ASSET center. Any opinions, findings, and conclusion or recommendations expressed in this material are those of the authors and do not necessarily reflect the view of DARPA, the Army, or the US government.

AMF acknowledges the funding from the Natural Sciences and Engineering Research Council of Canada (NSERC) through the Discovery Grant program (2021-03701).
CV acknowledges the funding from the Ontario Graduate Scholarship.
Resources used in preparing this research were provided, in part, by the Province of Ontario, the Government of Canada through CIFAR, and companies sponsoring the Vector Institute.

\bibliography{iclr2025_conference}
\bibliographystyle{iclr2025_conference}

\newpage

\appendix

\section{Limitations}
\label{app:limitations}

The core limitation of our methodology relies in the assumption that a sufficiently strong environment model can indeed be learned online.
While a proof of feasibility exists for many interesting RL benchmarks in the forms of the Dreamer \citep{hafner2021mastering} and TD-MPC2 \citep{hansen2024tdmpc} lines of work among many others, for a completely novel environment a practitioner will still have to test if current model learning schemes are sufficient to achieve strong control policies.

Furthermore, we can only generate data from the states visited under a past policy.
There is still a difference between the state distribution of the replay buffer, and the target policy stationary distribution.
While this difference does not seem to lead to catastrophic failures in the DMC benchmarks, the distribution shift might be more problematic in other environments.

Finally, we observe an interesting failure cases of our idea: in some simple environments we surprisingly observe worse performance with our network architectures compared to the BRO baseline.
This issue is likely due to the fact that the TD-MPC2 architecture is tuned for learning in complex high-dimensional problems, which leaves it potentially over-parameterized on simple tasks.

While our work shows that reduced learning capacity due to plasticity does not seem to be the major contributor to learning problems in benchmarks like DMC, that does not exclude the possibility that related issues appear nonetheless after accounting for the off-policy value estimation problem.
We did not test increasing the reset ratio even further as other prior work has done, as we already observed no benefits from increasing the replay ratio from 8 to 16 in most of our experiments and performed on par or beyond previous baselines.
Issues in reinforcement learning are often entangled in a complex way, e.g. a failure in exploration can lead to stagnant data in the replay buffer which prevents a critic from further improving its estimates, leading to worse exploration and so on.

\section{Extended related work}
\label{app:related_work}

Beyond mitigating value function overestimation and unstable learning (see \autoref{sec:prior}), other works have approached the difficulty of off-policy learning and high update ratios from other perspectives.
Here, we survey further related papers which do not provide direct background for this work, but are nonethless relevant as either alternative approaches or possible enhancements.

\paragraph{Other ways of incorporating off-policy data} Having access to more diverse data has been shown to be beneficial for reinforcement learning, when this data is carefully used to mitigate the problems resulting from off-policy training.
\citet{ball2023efficient} show that a large offline replay buffer can be used to improve training by sampling online training batches both from online data and offline data, and labelling the offline transitions with a reward of 0.
\citet{agarwal2022reincarnating} and \citet{tirumala2024replay} also highlight that previously collected replay buffers can be used to improve training performance on agents.
In this work, we focus on the online setting where we do not have access to a replay buffer of previously collected transitions.
These ideas however can easily be combined by e.g. training a model from an available larger offline data buffer.

\paragraph{\rebuttal{SynthER}} Another \rebuttal{related} approach to obtain additional data is the diffusion-based method proposed by \citep{lu2024synthetic}.
In this work, the replay buffer data is augmented with additional samples obtained from a diffusion model that is trained on the replay buffer.
\rebuttal{The underlying hypothesis of SynthER is that the failure of high-UTD learning stems mostly from a lack of diverse data in the replay buffer. 
They demonstrate on the easier DM Control tasks that simply adding data from a generative model can be beneficial to learning. 
This is opposed to our hypothesis, which claims that high-UTD learning is difficult specifically due to the lack of off-policy action corrections.
As SynthER does not provide results on the hard DMC tasks, we reran the original code to compare our claims
The results and a discussion can be found in \autoref{app:synther}.}

In the online off-policy regime, \citet{fujimoto2024sale} recently proposed TD7, which incorporates similar architectural choices to \algoname.
They use a self-predictive encoder to learn good state representations, but concatenate them with the state and action representation provided by the environment to limit loss of information.
This design choice proved to be beneficial but would require learning a observation-space next-state prediction, which is difficult in practice, especially in high dimensional environments.
To address the policy distribution shift, TD7 does not update the actor at every timestep but instead collects several full trajectories with a fixed policy and then conducts update steps afterwards.
However, this interval still needs to be balanced as a hyperparameter.
TD7 was not evaluated on DMC, which is why we do not present a comparison.

\paragraph{Model-based reinforcement learning} As surveying model-based reinforcement learning is a rather sizable tasks, we refer readers to the survey by \citep{moerland} for reference.
Decision-aware latent models such as the one \citet{hansen2024tdmpc} and we use have been studied specifically in several different variants.
\citet{silver2017predictron} proposes a latent model that is trained with TD learning, which provides the basis for the \citet{schrittwieser2020mastering} algorithm.
The addition of a latent self-prediction loss was first proposed by \citet{li2023efficient} to stabilize learning problems with the TD learning loss.
This interplay was further studied by \citet{ni2024bridging} and \citet{voelcker2024when} in recent works.

From a theoretical angle, decision-aware losses similar to those used in MuZero where first studied by \citet{vaml} and \citet{itervaml}.
\citet{grimm2020value} and \citet{grimm2021proper} further study the loss landscape and minimizers of such losses, while \citet{kastner2023distributional} studied the extension of the loss to distributional settings.

While previous works have called the stability of the VAML loss into question \citep{lovatto2020decision,voelcker2022value}, we find that it is stable and performant when combined with the HL-Gauss representation \citet{farebrother2024stop} and an auxiliary BYOL style loss \citet{grill2020bootstrap,li2023efficient}.
Compared to MuZero it is also significantly easier to implement.

A more thorough overview on the topic of decision-aware learning can be found by \citet{wei2024a}.

\rebuttal{
\paragraph{Offline reinforcement learning}
In the context of batch reinforcement learning or offline RL \citep{lange2012batch,fujimoto2019bcq}, the action distribution shift is a known phenomenon.
The main counter to the problem however does not rely on closing the generalization gap, but on explicit pessimistic regularization \cite{jin2021pessimism}.
Such pessimistic regularization has been shown to be highly detrimental in online RL, as it removes the capability for the agent to explore its environment efficiently \citep{doro2023barrier,hussing2024dissecting}.
In offline RL, authors have explored the capability of models to provide some improvements to generalization \citep{yu2020mopo}. 
However, in online RL the community has mostly relied on the hope that additional optimistic exploration based on the value function will close the generalization gap without explicit interventions.
We show that this is not the case.
}

\paragraph{Loss of plasticity}
A phenomenon that was originally reported in continual learning is that tendency for neural network based agents to lose their ability to learn over time. This phenomenon has also been investigated in the realms of RL~\citep{igl2021transient}, as RL can effectively be thought of as a type of continual learning problem. Sometimes the phenomenon is referred to as plasticity loss~\citep{lyle2021understanding, abbas2023loss}. 
As highlighted before, we do not find strong evidence for the primacy bias or loss of plasticity during our experiments on the DMC suite. 

However, that does not imply that the phenomenon does not exist. In fact, we believe that resolving stability issues such as those presented in our paper will help us to better isolate other nuanced issues such as plasticity loss more clearly.
Previous studies have identified and combated plasticity loss using feature rank maximization~\citep{kumar2021implicit}, regularization~\citep{lyle2023understanding}, additional neural network copies~\citep{nikishin2024deep}, minimizing dormant neurons~\citep{sokar2023dormant, xu2024drm}, various neural network architecture changes~\citep{lee2023plastic}, slow and fast network updates~\citep{lee2024slow} or weight clipping~\citep{elsayed2024weightclipping}. 

It is unclear how many improvements obtained by these changes can be explained by divergence effects~\citep{hussing2024dissecting} or stability issues such as those established in our work as there seems to be a non-zero overlap in techniques that combat either. 
\citet{nauman2024overestimation} have argued that many RL training problems can be difficult to disentangle from the plasticity loss phenomenon.
An interesting direction of future work is to test for plasticity loss with well regularized off-policy value function learning, for instance by combining our method with separate solutions established for plasticity loss such as those from~\citet{lyle2024disentangling}. 

It is also not unlikely that the training dynamics of the state-based dense-reward tasks on the DMC suite are more benign than those found in Atari games. 
Many works on plasticity loss have studied sparse image-based control tasks with pure Q learning approaches, such as DQN on the Atari benchmark~\citep{sokar2023dormant, lee2024slow}. 
The problem may be more prevalent when replay buffers cannot be maintained in full and the RL setting becomes a true continual learning problem.

\paragraph{Other stability perspectives} Our work studies the stability of losses during training.
We highlight that forgoing resetting decreases regret as the executed policies are more stable in the sense that they are not reset at regular intervals.
We also highlight that model-generated data can somewhat improve the stability of policies against adversarial attacks.
However, there are other notions of \emph{stability} that should be considered relevant and that are orthogonal to our work. 
Here we will give a non extensive overview into the different directions that exist as a starting point for the reader. For instance from a theoretical perspective, stability can be formulated as differential privacy~\citep{vietri2020private} or algorithmic replicability to obtain identical policies~\citep{eaton2023replicable}. From a theoretical as well as practical perspective, issues such as robustness to adversarial attacks~\citep{nilim2005robust, iyengar2005robust, wiesemann2013robust, pinto2017robust}. Finally, from an empirical perspective robustness to hyperparameters~\citep{ceron2024consistency, patterson2024cross} and attempts at variance reduction to get more reliable solutions~\citep{anschel2017averaged, kuang2023variance} can be considered notions of stability.

\section{Mathematical derivations}
\label{app:math}

While the proof by \citet{sutton1988learning} which we use as a basis discusses the stationary distribution of the Markov chain $P^\pi$, we define our loss in terms of a discounted state-action occupancy. 
We therefore briefly prove an auxiliary result to extend the analysis to the case of discounted state occupancy probabilities.
Note that when we talk about positive-definiteness, we use a definition which applies to potentially non-symmetric matrices, and merely requires that $u^\top X u \> 0$ for all vectors $u$.

\begin{proposition}
\label{prop:aux}
Let $P$ be a stochastic matrix. Define the discounted state occupancy distribution $\mu$ of $P$ for some starting state distribution $\rho$ and some discount factor $\gamma \in [0,1)$ as $$\mu^\top = (1 - \gamma) \sum_{n=0}^\infty \gamma^n \rho^\top P^n.$$ Let $D$ be a diagonal matrix whose entries correspond to the discounted state occupancy distribution.
Then the matrix $D(I - \gamma P)$ is positive definite.
\end{proposition}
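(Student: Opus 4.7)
The plan is to adapt the classical Sutton-style argument for the positive definiteness of $\mathrm{diag}(\pi)(I - \gamma P)$ under the \emph{stationary} distribution of $P$ to the case of the \emph{discounted occupancy} measure $\mu$, which does not satisfy $\mu^\top P = \mu^\top$ but instead a modified Bellman-type flow equation. The central observation I would exploit is that, expanding the geometric series,
\begin{equation*}
    \mu^\top \;=\; (1-\gamma)\rho^\top + \gamma \mu^\top P,
\end{equation*}
which follows by splitting off the $n=0$ term and reindexing. Equivalently, $\mu^\top(I-\gamma P) = (1-\gamma)\rho^\top$, so $D(I-\gamma P)$ has row-like structure controlled by $\rho$.

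Next, I would work directly with the quadratic form. For an arbitrary vector $u$, write
\begin{equation*}
    u^\top D(I-\gamma P)u \;=\; \sum_{i} \mu_i u_i^2 \;-\; \gamma \sum_{i,j} \mu_i P_{ij}\, u_i u_j.
\end{equation*}
The cross term is the object to bound. I would apply the elementary inequality $u_i u_j \le \tfrac{1}{2}(u_i^2 + u_j^2)$ together with the fact that $P$ is stochastic ($\sum_j P_{ij}=1$) to get
\begin{equation*}
    \gamma\sum_{i,j}\mu_i P_{ij} u_i u_j \;\le\; \tfrac{\gamma}{2}\sum_i \mu_i u_i^2 \;+\; \tfrac{\gamma}{2}\sum_j u_j^2 \Bigl(\sum_i \mu_i P_{ij}\Bigr).
\end{equation*}
The inner sum $\sum_i \mu_i P_{ij}$ is exactly the $j$-th entry of $\mu^\top P$, and the Bellman-type identity gives $\gamma \mu^\top P = \mu^\top - (1-\gamma)\rho^\top$, so $\gamma \sum_i \mu_i P_{ij} = \mu_j - (1-\gamma)\rho_j \le \mu_j$. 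Substituting this bound collapses both terms to $\mu_i u_i^2$ and yields
\begin{equation*}
    u^\top D(I-\gamma P) u \;\ge\; \tfrac{1-\gamma}{2}\sum_i \mu_i u_i^2,
\end{equation*}
which is strictly positive whenever $u\neq 0$, provided $\mu$ has strictly positive entries.

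The last step is the only delicate point: the strict inequality requires $\mu_i > 0$ for all $i$, which is the main obstacle to a clean statement. I would handle this either by assuming $\rho$ has full support (in which case $\mu_i \ge (1-\gamma)\rho_i > 0$ is immediate from the Bellman identity), or by restricting attention to the reachable sub-chain so that $\mu$ is positive on its support. This is a standard caveat in the TD-stability literature and does not affect the downstream use of the proposition, since in our setting $\rho$ is the starting state distribution and the matrix $D$ naturally has a support-restricted interpretation. With this in place, the Cauchy--Schwarz step and the Bellman identity are routine, and the argument extends verbatim from the states $\mathcal{X}$ to the state--action factorization used in the main text by replacing $P$ with the induced kernel $P\Pi_i$.
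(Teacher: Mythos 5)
Your proof is correct and rests on the same pivotal identity as the paper's, namely $\mu^\top(I-\gamma P)=(1-\gamma)\rho^\top$, but the positive-definiteness step is executed by a different route. The paper symmetrizes the matrix and invokes Sutton's criterion for diagonally dominant matrices: it checks that $D(I-\gamma P)+(I-\gamma P^\top)D$ has positive diagonal and nonpositive off-diagonal entries, and that the row and column sums of $D(I-\gamma P)$ equal $(1-\gamma)\mu$ and $(1-\gamma)\rho$ respectively. You instead bound the quadratic form directly, splitting the cross term with $u_iu_j\le\tfrac12(u_i^2+u_j^2)$ and absorbing one half via row-stochasticity of $P$ and the other via the flow identity. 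The two arguments are at bottom the same mechanism---your elementary inequality is precisely what proves the diagonal-dominance lemma in this instance---but yours is self-contained and yields the explicit quantitative bound $u^\top D(I-\gamma P)u\ge\tfrac{1-\gamma}{2}\,u^\top D u$, which the citation-based route does not provide. You are also right to flag the support caveat: strict positive definiteness requires $\mu_i>0$ for all $i$, which holds whenever $\rho$ has full support since $\mu\ge(1-\gamma)\rho$ entrywise; the paper's proof needs the identical assumption for its row and column sums to be strictly positive but leaves it implicit (its ``$\geq 1$'' inequalities are evidently typos for entrywise positivity). One cosmetic nit: the inequality you invoke at the end is AM--GM rather than Cauchy--Schwarz.
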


\begin{proof}
    First, note that $$(1 - \gamma) \rho^\top + \gamma \mu^\top P = \mu^\top$$ by the definition of $\mu$ and the properties of the infinite sum.
    Therefore, $$\mu^\top P = \frac{1}{\gamma}\left(\mu - (1 - \gamma) \rho\right)\enspace.$$

    \citet{sutton1988learning} asserts that a matrix $A$ is positive definite iff $A + A^\top$ is positive definite.
    Furthermore, if the diagonal entries of a symmetric matrix are positive and its off-diagonal entries are negative, then it suffices to show that the row and column sums of matrix are positive.

    For $$D(I - \gamma P) + (I - \gamma P^\top) D^\top$$ the off-diagonal terms are clearly non positive as D is diagonal.
    On the main diagonal, we have $2(\mu_i - \gamma p(i|i)\mu_i)$ which is positive as $p(\mu_i|\mu_i) \leq 1$.
    It now suffices to show that the row and column sums of $D(I - \gamma P)$ are positive.
    For the row sum, we can make use of the fact that $P$ is a stochastic matrix, so $$D(I-\gamma P)\mathbf{1} = D(\mathbf{1} - \gamma\mathbf{1}) \geq 1\enspace.$$

    For the column sum, we make use of the fact that $\mathbf{1}D = \mu$.
    Then $$\mu(I - \gamma P) = \mu - \gamma \frac{1}{\gamma}\left(\mu - (1 - \gamma) \rho\right) = (1-\gamma)\rho \geq 1\enspace.$$
    As $\rho$ is a probability vector the final inequality holds for all $\gamma \in [0,1)$.
    
    All conditions presented by \citet{sutton1988learning} hold, and therefore we have $D(I-\gamma P)$ is positive definite.
\end{proof}

To derive the gradient flow stability conditions in \autoref{sec:theory}, we first restate the loss function
\begin{align}
    L(\theta) 
    = & \sum_{i=1}^n \left[D^{\pi_i}\left(\Phi^\top \theta - [R + \gamma P^\pi \Phi^\top \theta]_\mathrm{sg}\right))^2\right].
\end{align}

The stability of learning with this loss can be analyzed using the gradient flow~\citep{sutton2016emphatic}. To derive the gradient flow, we compute the gradient of the loss function with regard to the parameters $\theta$.
As the loss has a relatively simple quadratic form and the derivative is a linear transformation, it decomposes nicely as
\begin{align}
    \nabla_\theta L(\theta) &= 2 \Phi \sum_{i=1}^n D^{\pi_i} \left(\Phi^\top \theta - R - \gamma P \Pi \Phi^\top \theta\right)\\
    &= 2 \Phi \sum_{i=1}^n D^{\pi_i} \left(\left(I - \gamma P \Pi \right)\Phi^\top \theta - R\right)\\
    &= 2 \Phi \sum_{i=1}^n D^{\pi_i} \left(I - \gamma P^\pi \right)\Phi^\top \theta - 2 \Phi \sum_{i=1}^n D^{\pi_i} R\enspace.
\end{align}

Using the equation for the gradient flow $\dot{\theta} = - \frac{\eta}{2} \nabla_\theta L(\theta)$ with learning rate $\frac{\eta}{2}$, we obtain
\begin{align}
    \dot\theta &= - \eta \Phi \sum_{i=1}^n D^{\pi_i} \left(I - \gamma P^\pi \right)\Phi^\top \theta + \eta \Phi \sum_{i=1}^n D^{\pi_i} R\quad,
\end{align}

%
This gradient flow is guaranteed to be stables (meaning it will not diverge around the stationary point $\theta^*$) if the key matrix $\sum_{i=1}^n D^{\pi_i} \left(I - \gamma P \Pi \right)$ is positive definite \citep{sutton1988learning}. 

We can decompose our key matrix into the on-policy key matrix and a remainder easily
\begin{align}
    &\sum_{i=1}^n D^{\pi_i} \left(I - \gamma P\Pi \right) \\
    =& \sum_{i=1}^n D^{\pi_i} \left(I - \gamma P\Pi + \gamma P\Pi_i - \gamma P\Pi_i \right) \\
    =& \blue{\sum_{i=1}^n D^{\pi_i} \left(I - \gamma P\Pi_i \right)} + \gamma \red{\sum_{i=1} D^{\pi_i}P (\Pi_i - \Pi)} \enspace .
\end{align}

The \blue{first} group of summands are all positive definite, following \autoref{prop:aux}.
As the sum of positive definite matrices is positive definite, the claim stands.

However, the \red{second} group has no such guarantees.
This highlights the role that the target policy action selection plays in the stability of Q learning.

\section{Implementation}
\label{app:implementation}

\begin{table}
\begin{center}
\begin{tabular}{c|c|c|c}
     Encoder $\Phi$ & Dense Layer & Mish & in\_size=$|\mathcal{X}|$, out\_size=$512$ \\
     & Dense Layer & Simnorm(8) & out\_size=$512$\\\hline
     
     & Dense Layer & Mish &in\_size=512 + $|\mathcal{A}|$, out\_size=$512$ \\
     Latent Model $F$ & Dense Layer & Mish & out\_size=$512$\\\
     & Dense Layer & Simnorm(8) &out\_size=$512$\\\hline
     
     & Dense Layer & Mish &in\_size=512 + $|\mathcal{A}|$, out\_size=$512$ \\
     Q head $\hat{Q}$& Dense Layer & Mish & out\_size=$512$\\
     & Dense Layer & -- & out\_size=$1$\\\hline

     & Dense Layer & Mish &in\_size=512, out\_size=$512$ \\
     Actor $\hat{\pi}$& Dense Layer & Mish & out\_size=$512$\\
     & Dense Layer & tanh & out\_size=$|\mathcal{A}|$
\end{tabular}
\end{center}
\caption{Network architecture for \algoname.}
\label{tab:arch}
\end{table}

\begin{table}
    \begin{tabular}{l|c}
        Parameter & \\\hline 
        Initial steps (Random policy) & $5000$  \\
        Batch size & $512$ \\
        RL learning rate & $0.0003$ \\
        Model learning rate & $0.0003$ \\
        Encoder learning rate & $0.0001$ \\
        Soft update $\tau$  & $0.995$ \\
        Discount factor $\gamma$  & $0.99$\\
        Model forward prediction steps & $1$ ($4$ for AR=1) \\
        Gradient clipping &  $10.0$ \\
        HL-Gauss vmin& $-150 \cdot \mathrm{AR}$ \\
    \end{tabular}
~
    \begin{tabular}{l|c}
        Parameter & \\\hline 
        HL-Gauss vmax& $150 \cdot \mathrm{AR}$\\ 
        HL-Gauss num bins& $151$\\
        Model data proportion& $0.95$\\
        Reset interval (where applicable)& $200000$\\
        Model \& encoder update ratio & $1$ \\
        Actor \& critic update ratio & varying \\
        MPC number of samples & $512$ \\
        MPC iterations& $6$ \\
        MPC top k& $64$\\
        MPC temperature& $0.5$
    \end{tabular}
    \caption{Hyperparameters. We adapted three parameters to the action repeat = 1 setting, as the magnitude of the reward changes.}
    \label{tab:hyperparams}
\end{table}

Our experiments are implemented in the jax library to allow for easy parallelization of multiple experiments across seeds.
All networks follow the standard architecture from \citet{hansen2024tdmpc} with two changes: instead of using an ensemble of critics, we opt for a single double critic pair.
We also do not use a stochastic policy, instead simply using a deterministic network with a tanh activation as used in \citet{ddpg,fujimoto2018addressing}.
Full hyperparameters are presented in \autoref{tab:hyperparams} and the architecture can be found in \autoref{tab:arch}.
We use mish activation functions \citep{misra2020mish} and the Adam optimizer to train our models \citep{kingma2015adam}.
For reference, our code is available at \url{https://github.com/adaptive-agents-lab/mad-td}.

\paragraph{Loss functions:} As we use the HL-Gauss representation \citep{farebrother2024stop} for the critic, the loss is the cross-entropy between the estimated Q function's categorical representation $Q_\mathrm{rep}$ and the bootstrapped TD estimate,
$$\mathcal{L}_Q = \sum_{i=1}^m {\mathrm{TD}(\hat{Q}_\mathrm{rep})}_i \log \hat{Q}_{\mathrm{rep}_i}\enspace,$$
where the indices $i$ denote the positions of the categorical vector representation used by HL-Gauss.
This is the same loss that is used for the two-hot encoding in \citet{hansen2024tdmpc}, the only difference is the target encoding function.
For more details, see \citet{farebrother2024stop}.

We use a latent encoder $\phi: \mathcal{X} \times \mathcal{A} \rightarrow \mathcal{Z}$ that maps into the SimNorm space, the space of n k-dimensional simplices \citep{lavoie2023simplicial}.
Writing $\hat{p}$ for the learned world model and \mbox{$\hat{r}, \hat{x}' \sim \hat{p}(|x,a)$} for reward and next latent-state samples, the loss for our model and encoder is
\begin{align}
    \mathcal{L}_\mathrm{model}(x,a,r,x') &= \mathcal{L}_\mathrm{rew}(x,a,r) + \mathcal{L}_\mathrm{forward}(x,a,x') + \mathcal{L}_Q(x,a,r,x')\\
    \mathcal{L}_\mathrm{rew}(x,a,r) &= \left( r - \hat{r} \right)^2 \\
    \mathcal{L}_\mathrm{forward} &= - \sum_{i=1}^{n \cdot k} \phi(x')_i \log \hat{x}'_i\enspace,
\end{align}
where the index $i$ is again element-wise across the simplex representation used for the latent state.
Note that we propagate the critic learning gradients into the encoder only for the real data and not the model generated one to prevent instability.

\paragraph{Baseline results}
We took available results from \citet{nauman2024bigger} and \citet{hansen2024tdmpc} for all plots where possible, and used the official implementation of BRO to rerun the experiments without resetting and with differing action repeats. 
Other hyperparameters were left as-is.

\label{app:setup}

\newpage

\section{Further results}
\label{app:results}

\subsection{Q value overestimation}
\label{app:results_q}
\begin{figure}[ht]
    \centering
    \includegraphics[width=\linewidth]{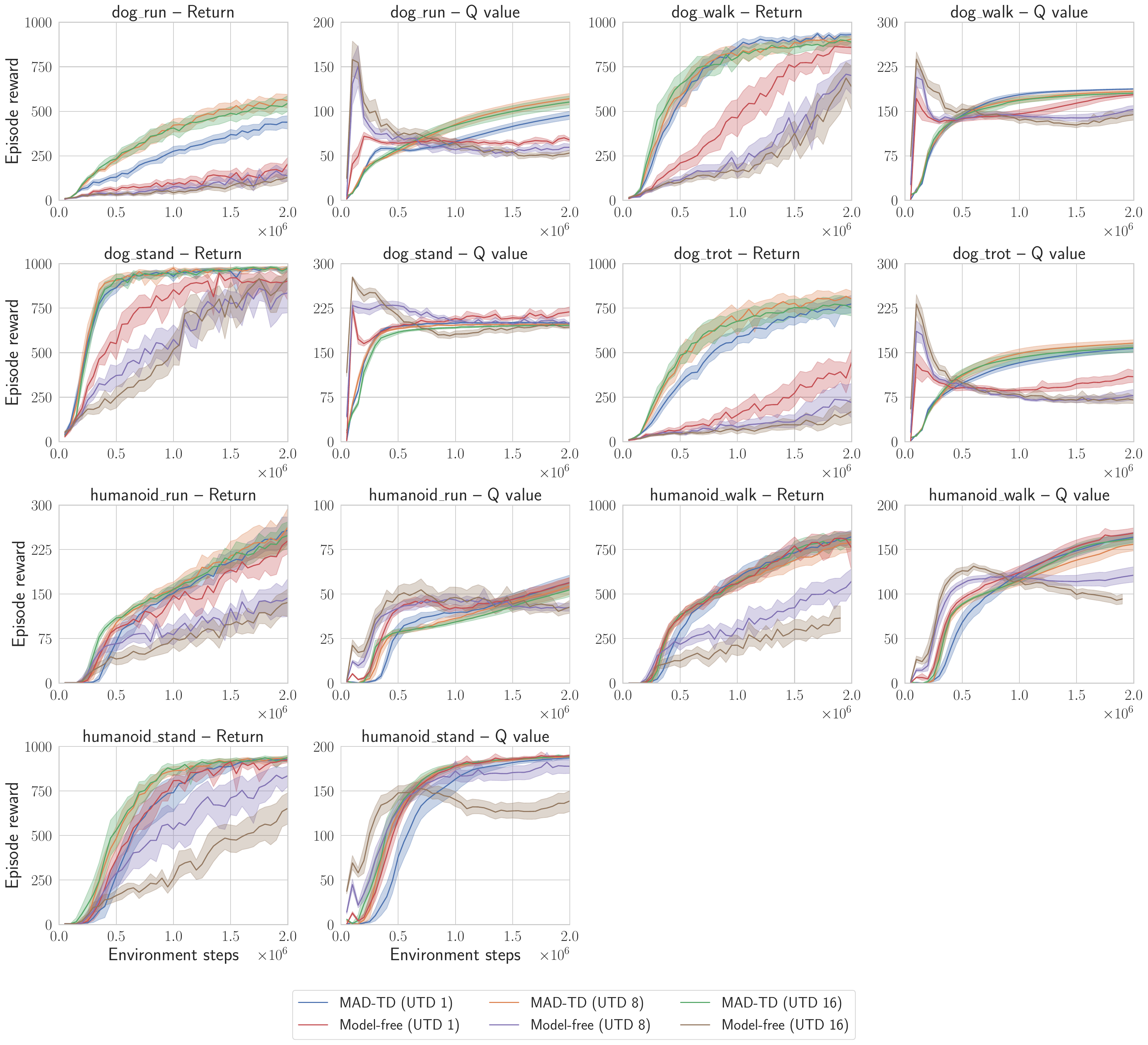}
    \caption{Return curves and Q values with differing UTD values.}
    \label{fig:q_hard}
\end{figure}

We plot the return curves and corresponding Q estimates for different UTD values and with and without model-generated data on the hard suite.
The results are presented in \autoref{fig:q_hard}.
As we see, across all tasks the model free variant strongly overestimates the Q values, especially in the beginning.

\newpage

\subsection{Humanoid results}
\label{app:results_hum}

\begin{figure}[H]
    \centering
    \includegraphics[width=0.9\linewidth]{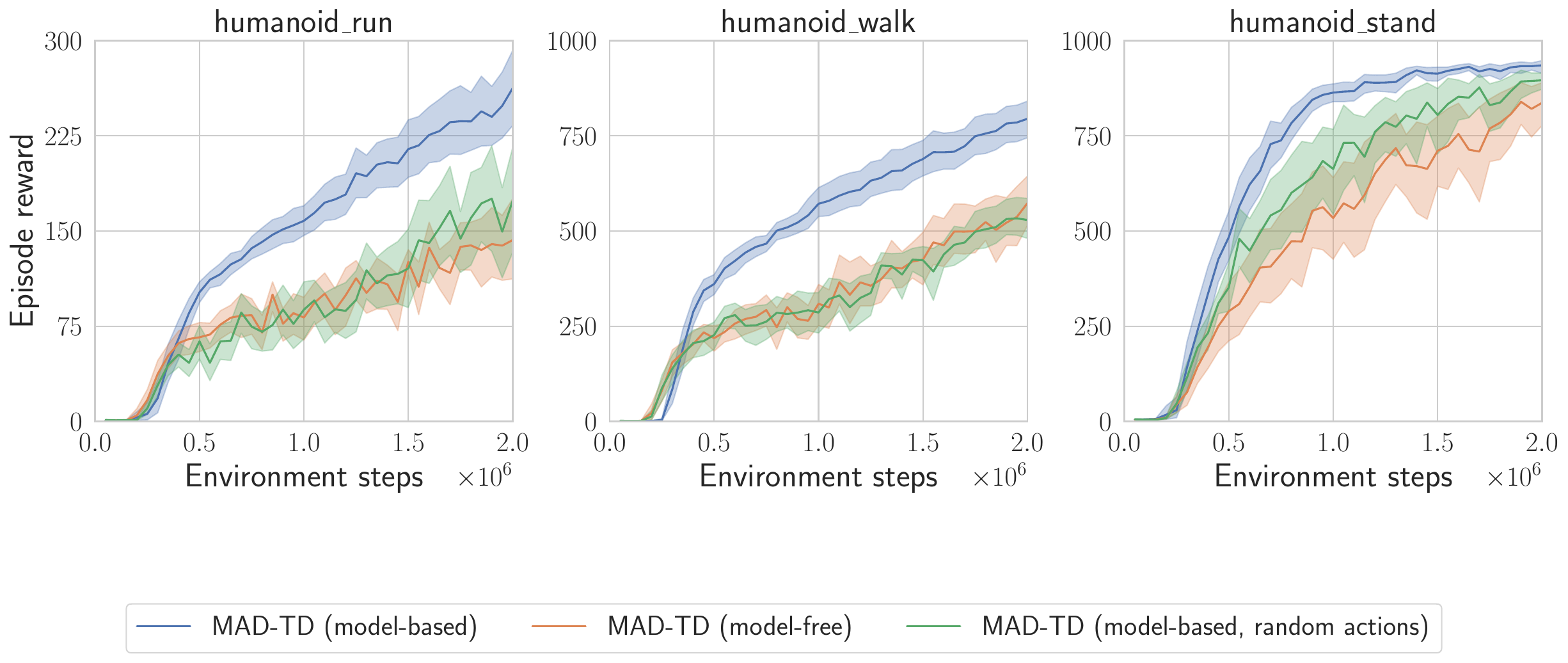}
    \caption{Return curves for the humanoid tasks when using on-policy (blue), random (green) and no model-generated data (orange). The observed performance impacts are comparable to the dog case.}
    \label{fig:humanoid_random_actions}
\end{figure}

For several experiments, we only showed the dog results from the main suite to avoid cluttering the main body of the paper.
The corresponding humanoid results are presented in \autoref{fig:q_hard} and \autoref{fig:humanoid_random_actions}, corresponding to \autoref{fig:main_dog} and \autoref{fig:random_action} respectively.
As the plots highlight, the main insights transfer across the hard tasks.

\newpage

\subsection{Different quantities of model data}
\label{app:model_data}

We evaluate using more model data to update our value functions and provide the results in \autoref{fig:higher_data}.
\rebuttal{Aggregated scores are presented in \autoref{fig:higher_data_aggregate}.}
We observe that the majority of gain is obtained when using limited amount of model data, and larger amounts only provided limited gains in some humanoid runs.
\rebuttal{When using high amounts of model generated data, we observe deteriorating performance, which implies that the agent learns to exploit the model instead of solving the real task.
This observation is consistent with similar observation about model exploration in prior work \cite{zhao2023simplified}}.

\begin{figure}[H]
    \centering
    \includegraphics[width=\linewidth]{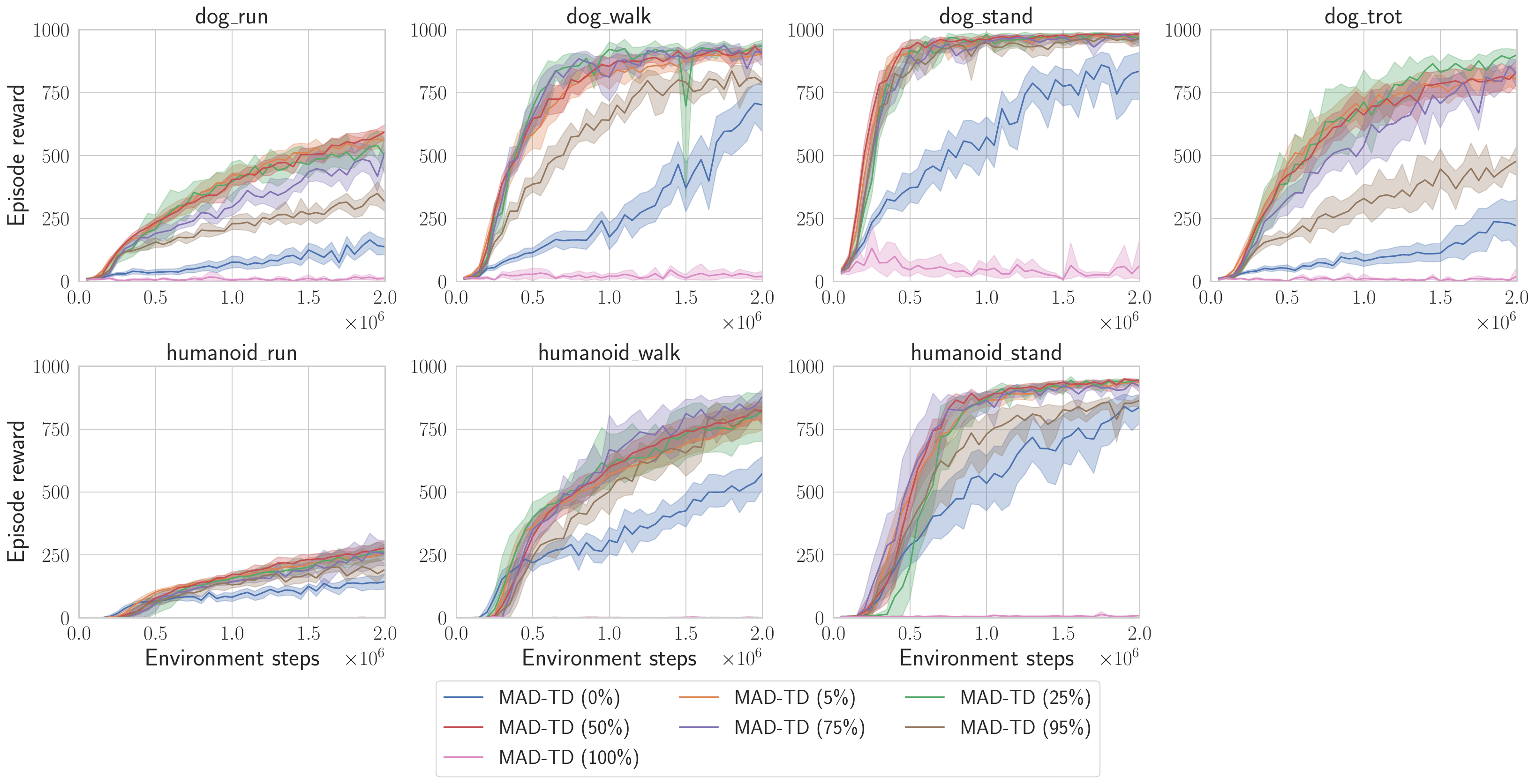}
    \caption{Return curves on the hard suite. We see that using substantially more data than 5\% does not improve performance in a statistically significant way.}
    \label{fig:higher_data}
\end{figure}

\begin{figure}[H]
    \centering
    \includegraphics[width=0.4\linewidth]{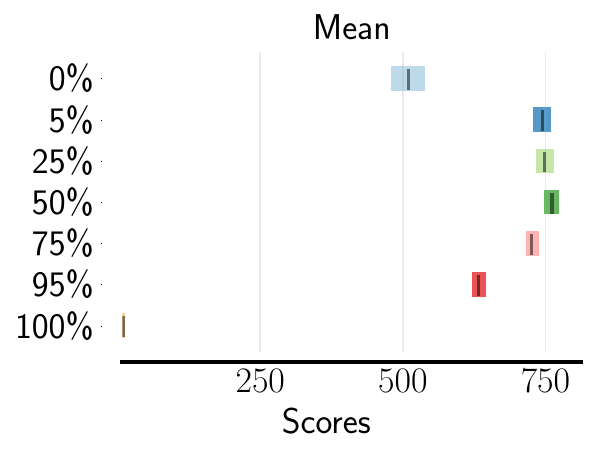}
    \caption{Aggregate statistics for differing values of $\alpha$ (amount of model data used) at UTD 8.}
    \label{fig:higher_data_aggregate}
\end{figure}

\newpage

\rebuttal{
\subsection{SynthER comparison}
\label{app:synther}

We present a comparison of our method and SynthER on the hard DMC tasks.
Results can be found in \autoref{fig:synther}.

As is evident from the lack of strong performance of SynthER, merely increasing the amount of generated data is insufficient to combat the failure of learning at high UTD.
We find that the Q values of the SynthER agents quickly diverge on all tasks in which it is unable to learn.
This strengthens our hypothesis that for hard tasks, off-policy action correction is vital to achieve strong results.
}

\begin{figure}[H]
    \centering
    \includegraphics[width=\linewidth]{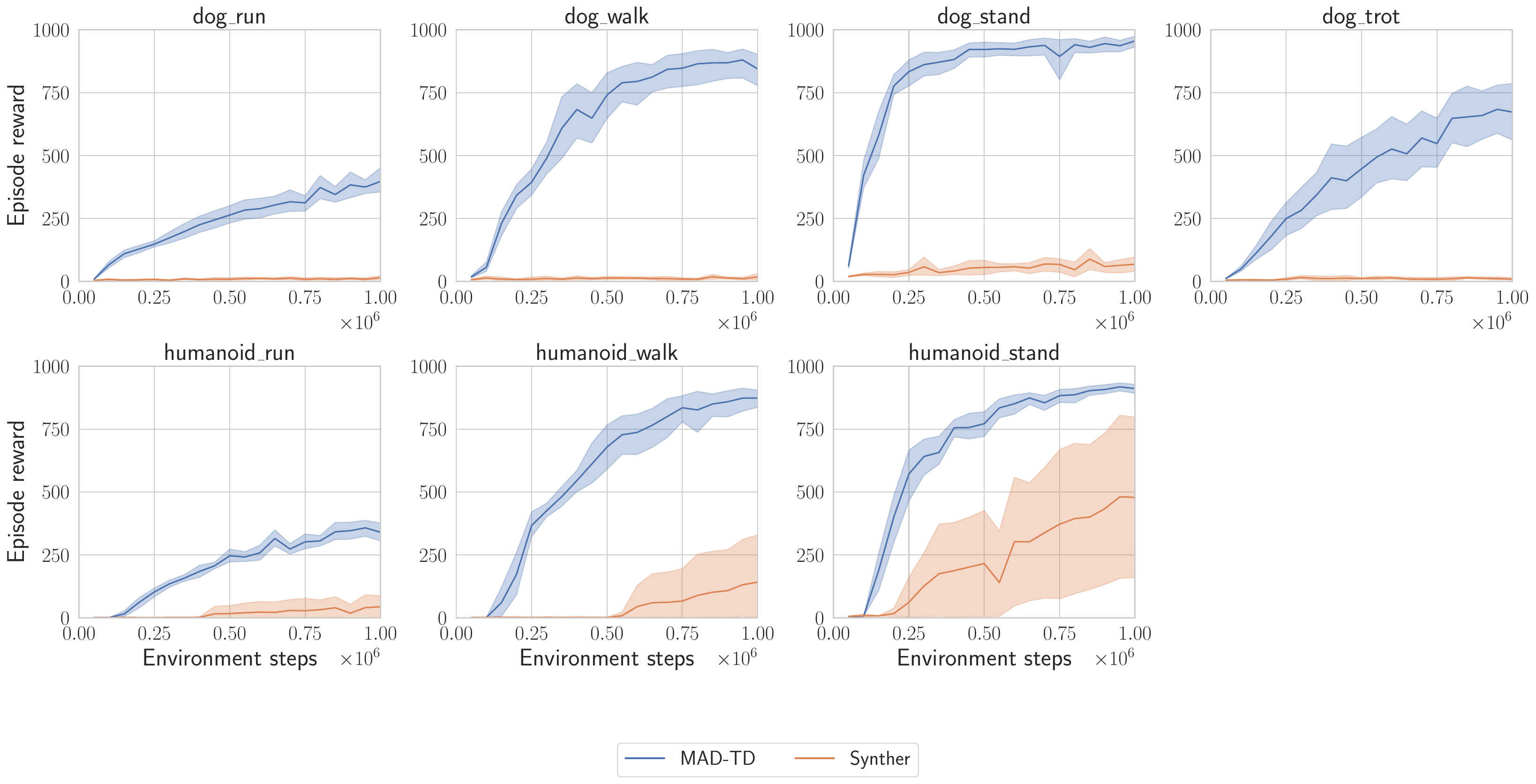}
    \caption{\rebuttal{Performance curves for \algoname and SynthER on the hard DMC tasks. SynthER fails to achieve nontrivial results on most tasks, only outperforming a random policy on the humanoid walk and stand tasks. }}
    \label{fig:synther}
\end{figure}

\newpage

\rebuttal{
\subsection{TD-MPC2 ablation}
\label{app:ablation}

As described in the main paper, we simplify the base model of TD-MPC2 to improve the computational efficiency of the algorithm.
This is necessary to conduct high UTD experiments.
Here, we present a direct comparison of the original TD-MPC2 model, and our adapted version (\autoref{fig:tdmpc_ablation}).
We compare MAD-TD without any model generated data, at UTD 1, which corresponds to the standard setting of TD-MPC2.
As pointed out in the main paper, all of our changes to the base model boil down to setting different hyperparameters, such as the rollout length, to achieve faster learning.

We find that this does not significantly change the overall results achieved by the base model, and we are therefore confident to attribute performance gains to our presented method.
}

\begin{figure}[H]
    \centering
    \includegraphics[width=\linewidth]{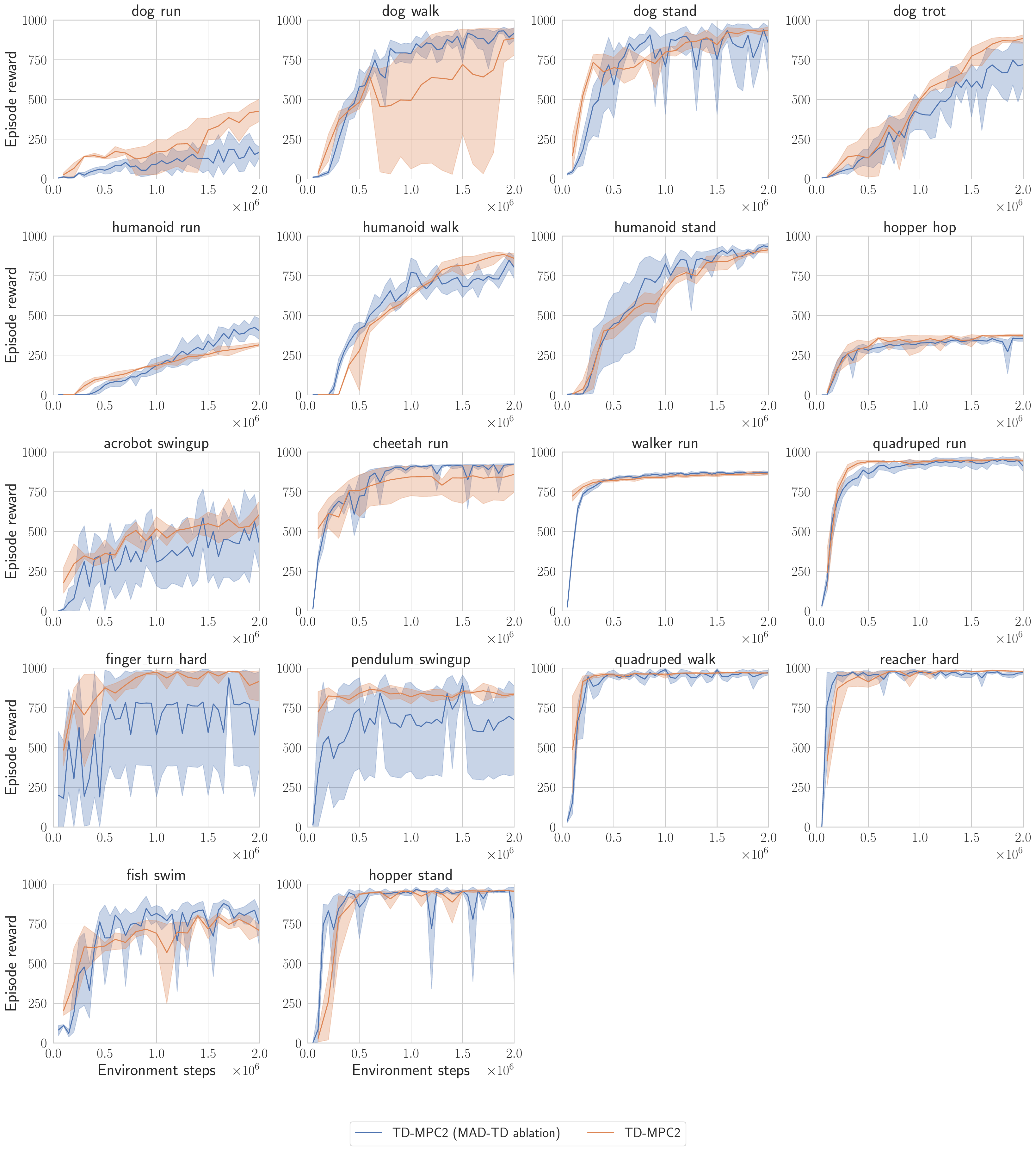}
    \caption{\rebuttal{Performance variation of the base \algoname model compared to TD-MPC2. Our changes only very few times lead to lower performance which is acceptable given the large reduction in computational cost.}}
    \label{fig:tdmpc_ablation}
\end{figure}

\newpage

\subsection{\algoname, BRO, TD-MPC2 per env on the hard suite}
\label{app:results_base}

We present the return curves for \algoname and the baselines per environment on the hard suite.
\autoref{fig:hard_all_fs_2} shows the results with action repeat 2 and \autoref{fig:hard_all_fs_1} with action repeat 1. 
Perhaps surprisingly, the results of the algorithms are not fully consistent across this regime. 
Partially, this can be explained by the fact that our method and TD-MPC2 were first developed in the regime of action repeat 2, while BRO was only evaluated in the action repeat 1 setting.
This suggests that the performance of each method depends in a non-trivial fashion on hyperparameter tuning.
Yet, across both action repeat setting \algoname outperforms BRO without resetting consistently and only under-performs any previous algorithm on the dog trot task in the action repeat 1 setting.

\begin{figure}[H]
    \centering
    \includegraphics[width=0.8\linewidth]{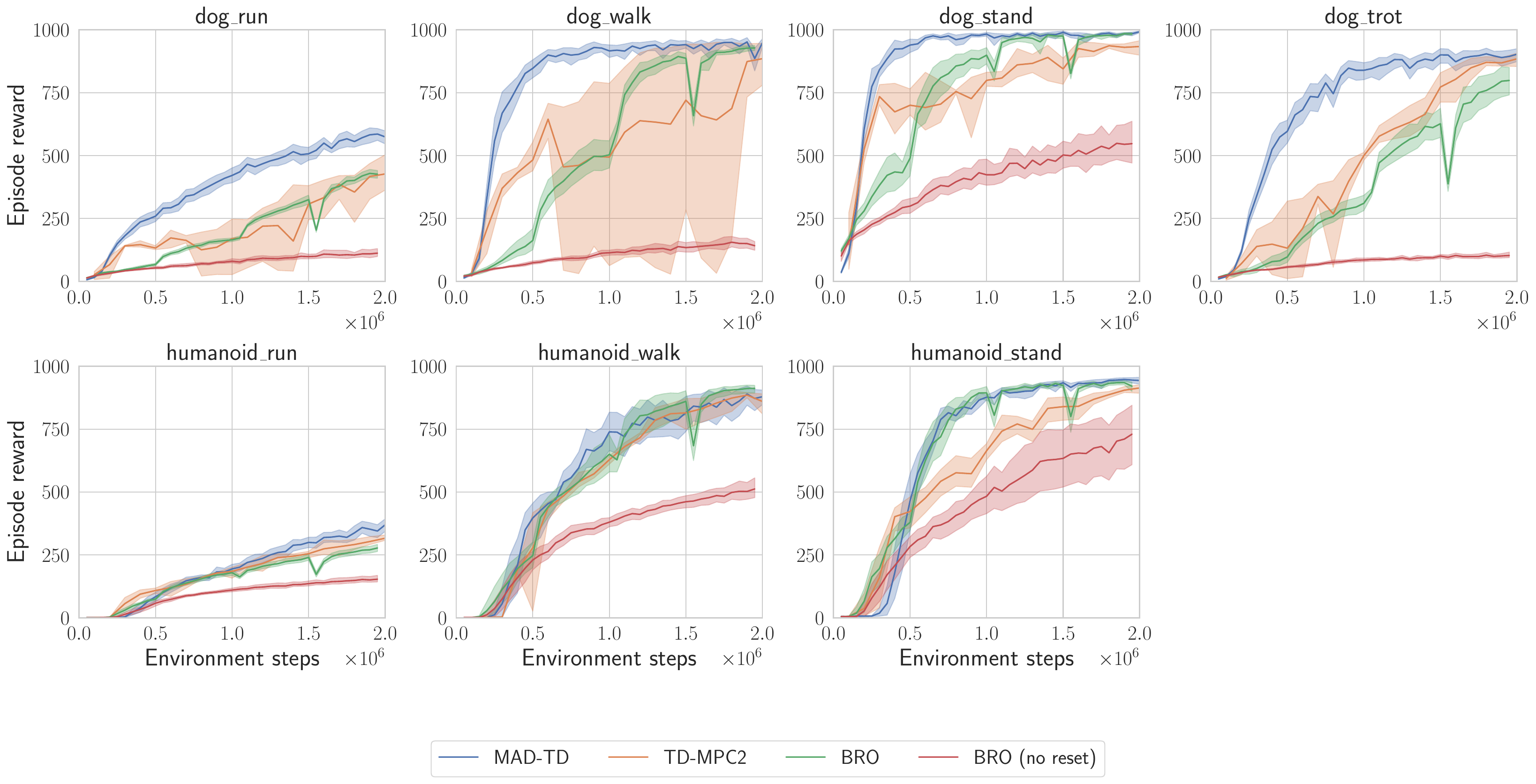}
    \caption{Return curves with action repeat set to 2.}
    \label{fig:hard_all_fs_2}
\end{figure}

\begin{figure}[H]
    \centering
    \includegraphics[width=0.8\linewidth]{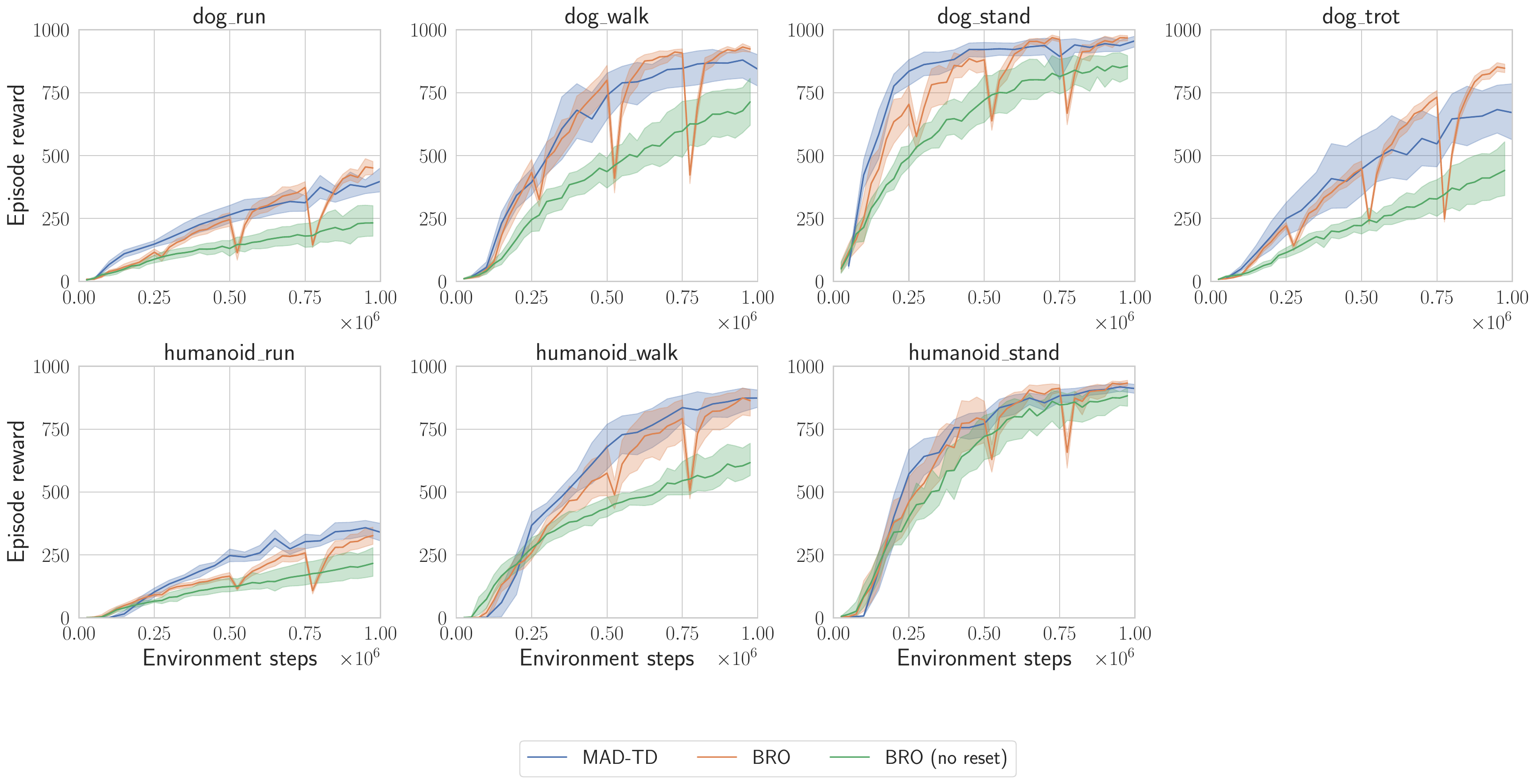}
    \caption{Return curves with action repeat set to 1.}
    \label{fig:hard_all_fs_1}
\end{figure}

We conjecture that the remaining gap in performance seems to be most likely attributable to exploration and optimism.
While we focus on learning accurate value functions, Bro contains several components which are specifically designed to improve exploration.
Investigating the tension between exploration and accurate value function fitting is an important direction for future work.

Bro and TD-MPC2 are explicitly evaluated without their exploration bonuses in separate evaluation rollouts.
We however do not conduct such as separate evaluation as we do not add any additional exploration noise to our training.
When plotting training performance, the gap between \algoname and Bro further closes, suggesting an important trade-off between test time and training performance.

\newpage

\subsection{Results across further DMC environments}
\label{app:results_further}

We conducted more experiments on all DMC environments which were shown to benefit from the interventions in prior work \citep{doro2023barrier,nauman2024bigger}.

\begin{figure}[h]
    \centering
    \includegraphics[width=0.8\linewidth]{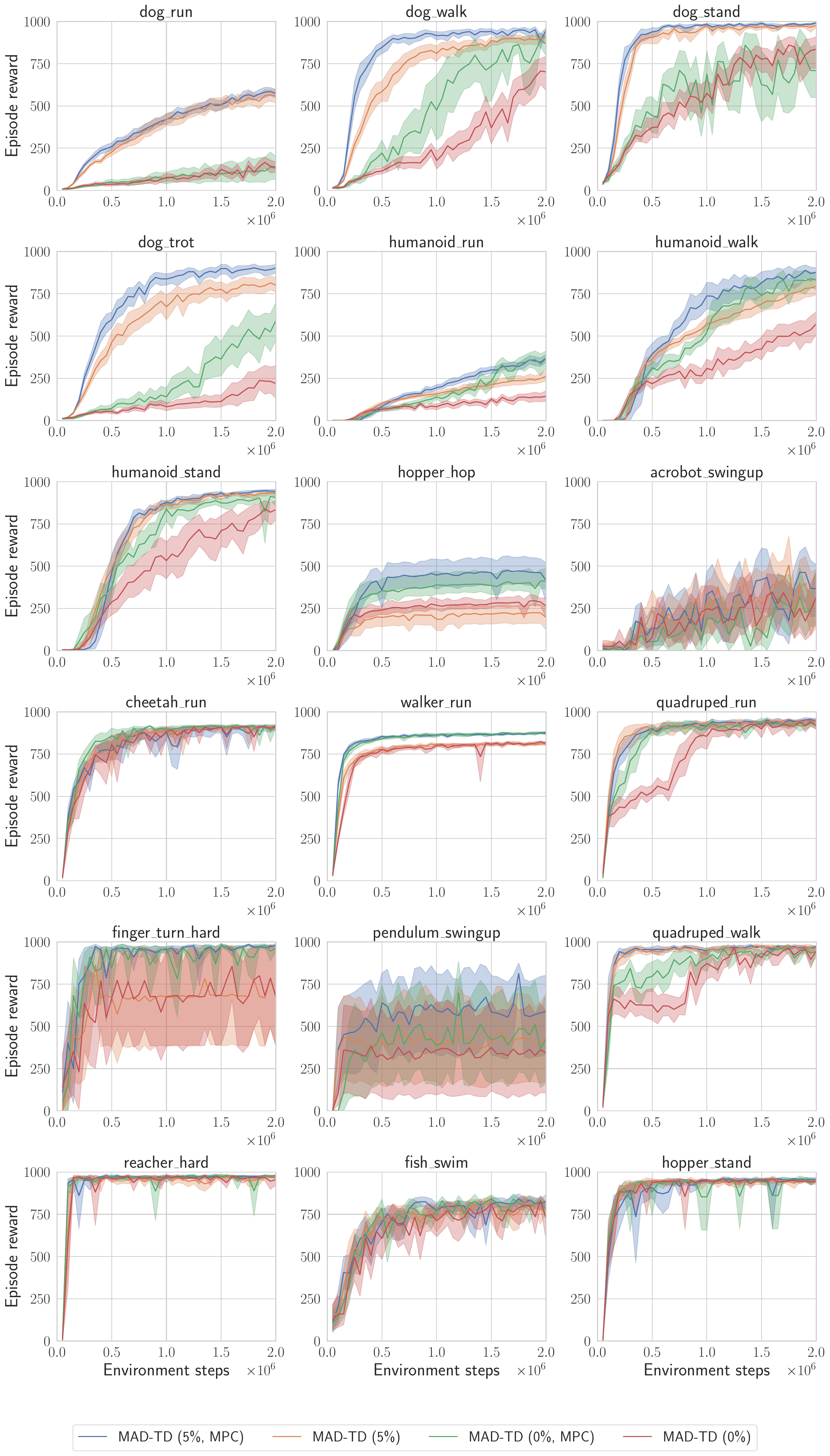}
    \caption{Return curves evaluating the impact of model-based data for critic learning and MPC. Overall, MPC and model-based critic learning both stabilize the learning process, as conjectured.}
    \label{fig:all_model_impact}
\end{figure}

\begin{figure}[H]
    \centering
    \includegraphics[width=0.8\linewidth]{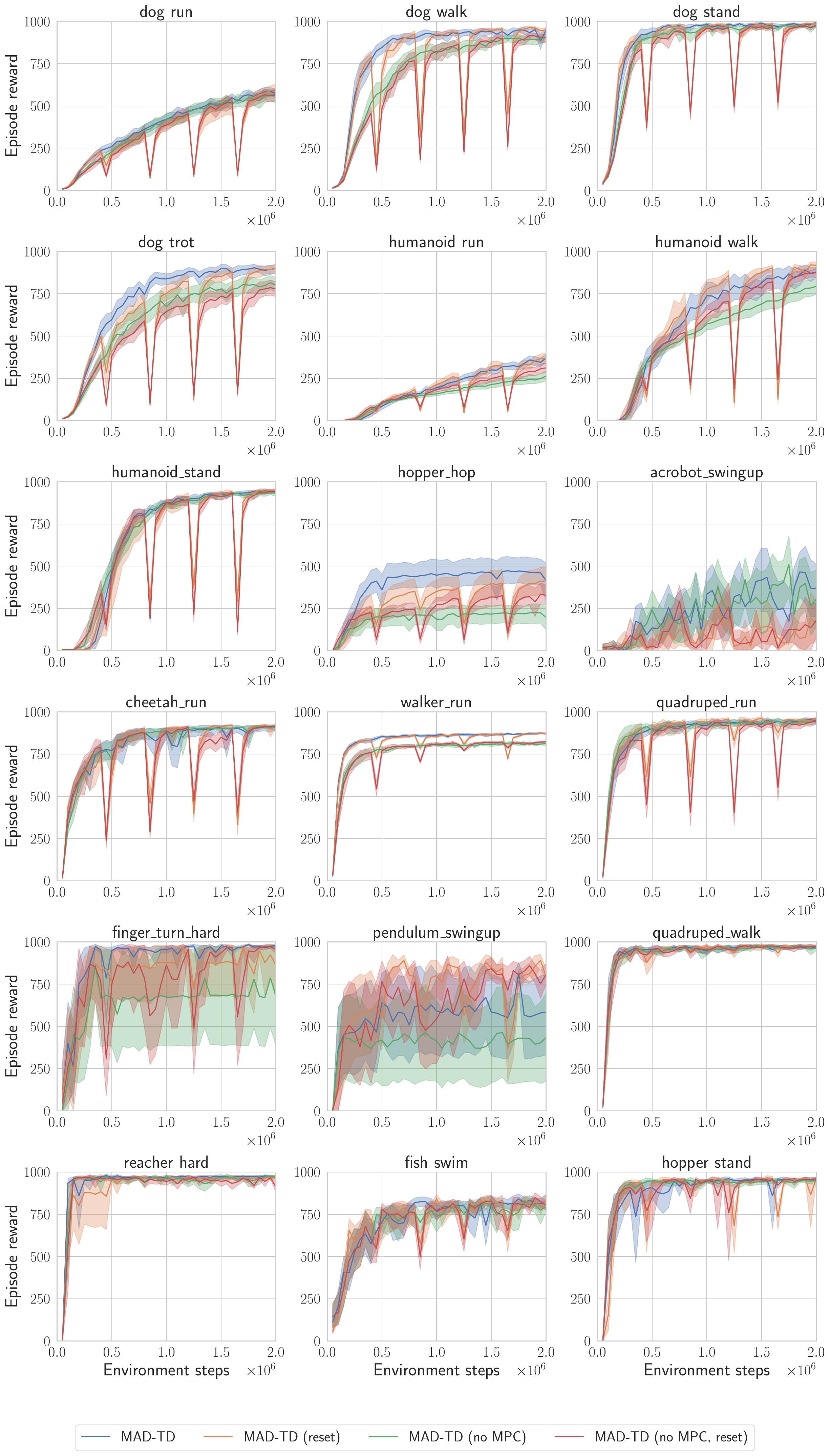}
    \caption{Return curves for the impact of resetting on \algoname with and without MPC. Without MPC, resetting can still improve the performance, but with MPC, we see no significant benefits from resetting across environments except pendulum. The hopper results highlight the importance (and danger) of the reset interval, as seemingly the reset algorithm is not able to recover ``in time'' to improve performance.}
    \label{fig:mpc_comp}
\end{figure}

\begin{figure}[H]
    \centering
    \includegraphics[width=.8\linewidth]{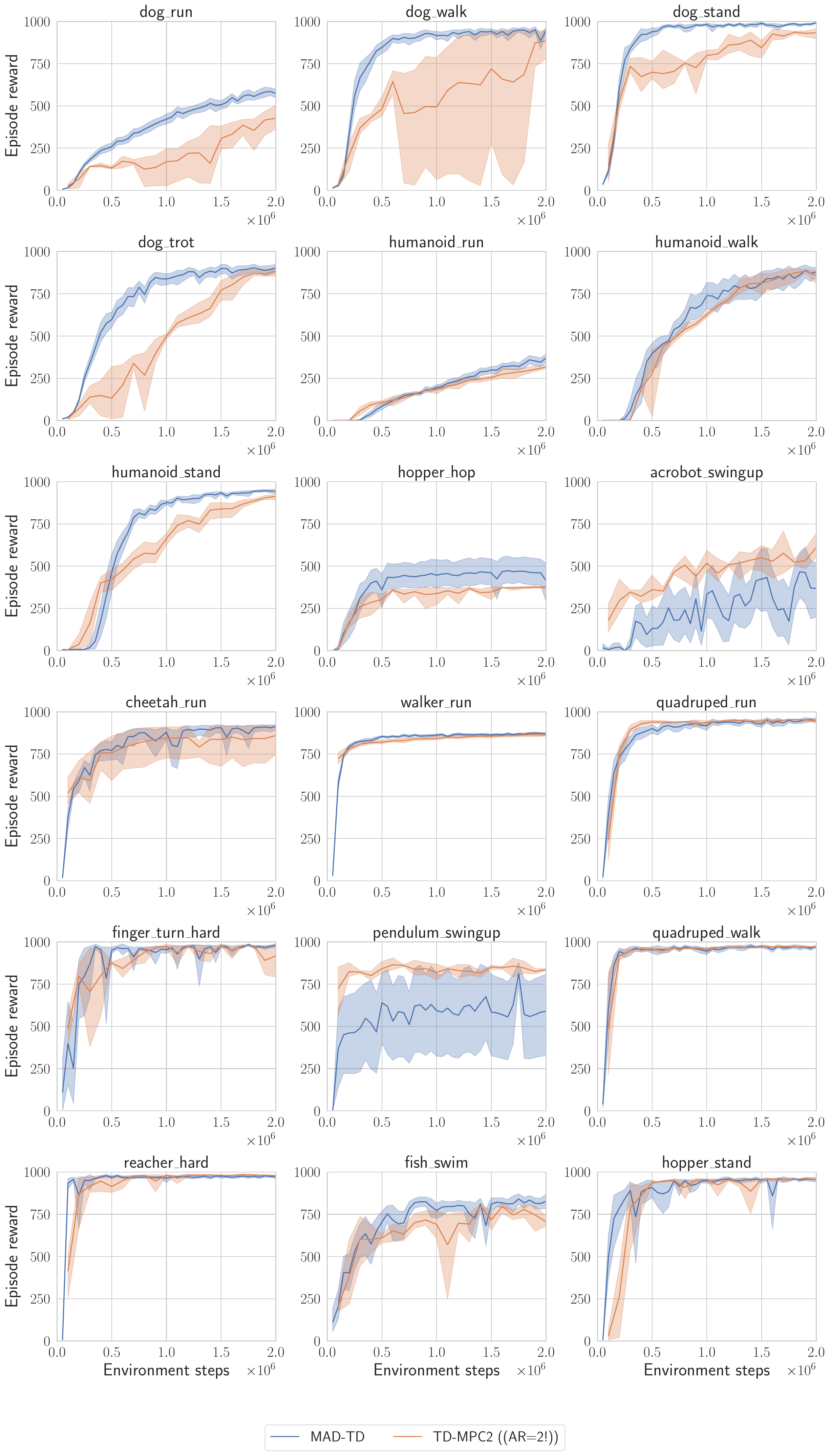}
    \caption{Comparison of \algoname and TD-MPC2 across more environments of the DMC suite. We observe gains compared to TD-MPC2 in the hard tasks, especially in terms of early learning performance, while TD-MPC2 has advantages on the pendulum\_swingup and acrobot\_swingup tasks. These seem to be exploration and stability issues for which the longer model rollouts of TD-MPC2 seem to help.}
    \label{fig:all_baseline}
\end{figure}

\newpage
\rebuttal{
\subsection{Metaworld} \label{app:metaworld}
To broaden the basis of comparison, we compare our method to BRO and TD-MPC2 on 9 selected environments from the metaworld suite.
Results can be found in \autoref{fig:metaworld}.

Overall, we observe that our method performs strongly on tasks in which the agent has access to a dense reward, such as \emph{lever-pull} and \emph{button press}.
MAD-TD demonstrates the ability to quickly and stably bootstrap reward when available.
When exploration is a challenge, learning can take longer with MAD-TD.
Strong exploration for high-UTD algorithms is not the focus of MAD-TD and remains an open problem~\citep{hussing2024dissecting}.
This is consistent with our core hypothesis: high UTD learning benefits in cases where fitting a correct value function is challenging.
In tasks such as \emph{pick-place-wall} the core challenge is exploration, as the agent receives no reward signal for the majority of early training.
We therefore cannot expect high UTD learning to improve the performance in these tasks.

As pointed out, BRO and to a lesser extent TD-MPC2 have the benefit of exploring with optimism bonuses and ensembled value functions.
We removed these from our method to cleanly study the impact of model generated data.
However, improvements to exploration are mostly orthogonal to our proposed method and can be freely combined in future work.

Finally, as also shown by \cite{nauman2024bigger}, there is a curious failure case of TD3 compared to SAC in the case of environments with sparse rewards.
In the absence of the entropy penalty form the SAC loss function, the tanh policy of TD3 tends to saturate, which can stymie exploration completely. 
This is, to the best of our knowledge, not discussed in the literature, and should be investigated in future work.
}

\begin{figure}[H]
    \centering
    \includegraphics[width=0.8\linewidth]{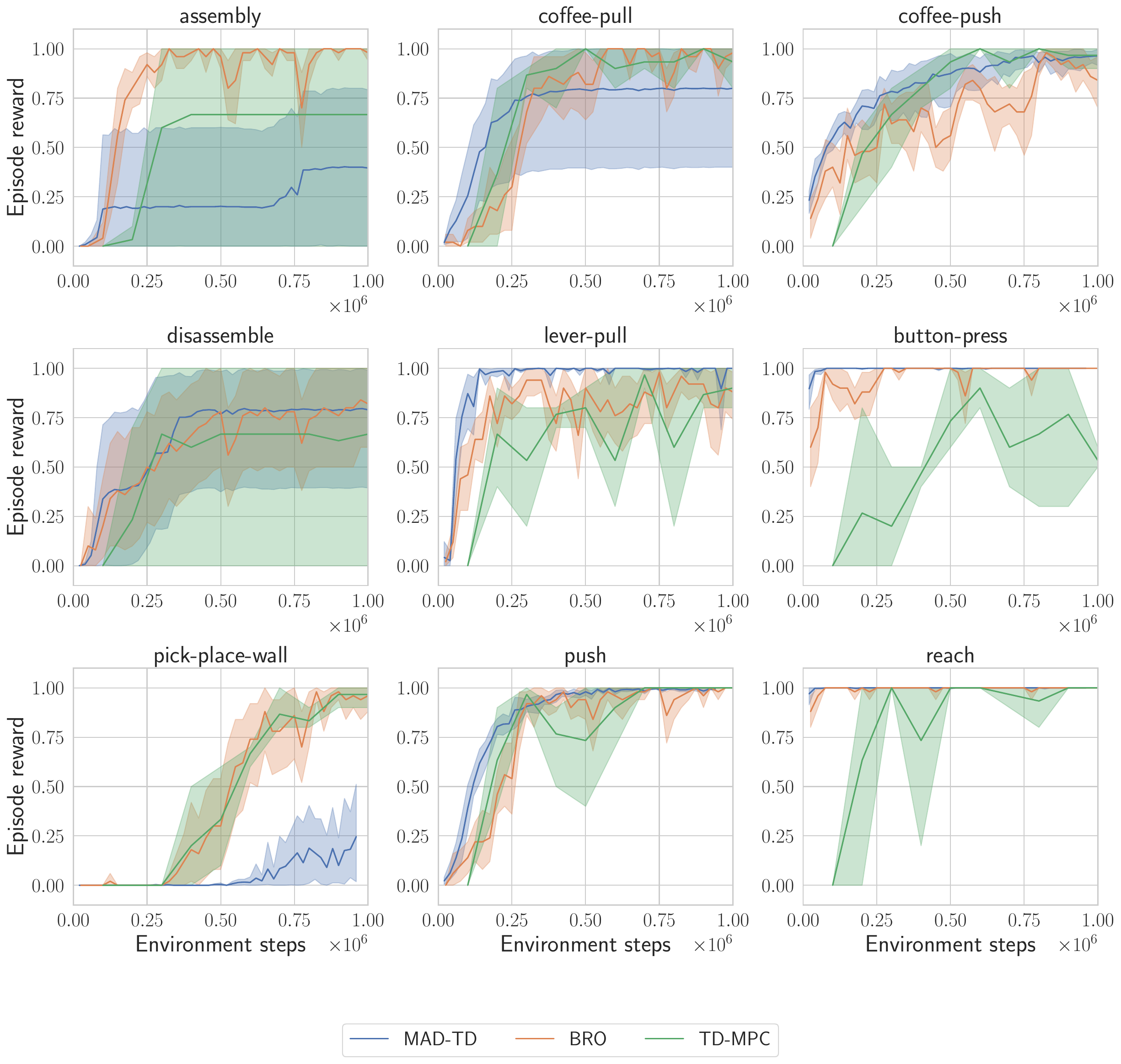}
    \caption{\rebuttal{Performance comparison on Metaworld between \algoname, BRO, and TD-MPC2. \algoname performs strongly on tasks which provide sufficient reward information to bootstrap the value function quickly, while learning more slowly on sparse reward tasks. This is consistent with the core goal of our algorithm, to stabilize and improve value function learning.}}
    \label{fig:metaworld}
\end{figure}
\end{document}